\tikzstyle{base}=[rectangle, rounded corners, draw=black,
\tikzstyle{rff}=[base, fill=green!20]
\tikzstyle{qm}=[base, fill=blue!20]
\tikzstyle{normalization}=[base, fill=red!20]
\theoremstyle{thmstyleone}%
\newtheorem{theorem}{Theorem}%  meant for continuous numbers
\newtheorem{proposition}[theorem]{Proposition}% 
\theoremstyle{thmstyletwo}%
\theoremstyle{thmstylethree}%
\begin{document}

\title[Learning with Density Matrices and Random Features]{Learning with Density Matrices and Random Features}

%%=============================================================%%
%% Prefix	-> \pfx{Dr}
%% GivenName	-> \fnm{Joergen W.}
%% Particle	-> \spfx{van der} -> surname prefix
%% FamilyName	-> \sur{Ploeg}
%% Suffix	-> \sfx{IV}
%% NatureName	-> \tanm{Poet Laureate} -> Title after name
%% Degrees	-> \dgr{MSc, PhD}
%% \author*[1,2]{\pfx{Dr} \fnm{Joergen W.} \spfx{van der} \sur{Ploeg} \sfx{IV} \tanm{Poet Laureate} 
%%                 \dgr{MSc, PhD}}\email{iauthor@gmail.com}
%%=============================================================%%

\author*[1]{\fnm{Fabio A.} \sur{González}}\email{fagonzalezo@unal.edu.co}

\author[1]{\fnm{Alejandro} \sur{Gallego}}\email{jagallegom@unal.edu.co}
\author[1]{\fnm{Santiago} \sur{Toledo-Cortés}}\email{stoledoc@unal.edu.co}
\author[2]{\fnm{Vladimir} \sur{Vargas-Calderón}}\email{vvargasc@unal.edu.co}

\affil*[1]{\orgdiv{MindLab,  Depto. de Ing. de Sistemas e Industrial}, \orgname{Universidad Nacional de Colombia}, \orgaddress{ \city{Bogotá},  \state{DC}, \country{Colombia}}}

\affil[2]{\orgdiv{Grupo de Superconductividad y Nanotecnología, Depto. de Física}, \orgname{Universidad Nacional de Colombia}, \orgaddress{ \city{Bogotá},  \state{DC}, \country{Colombia}}}

%%==================================%%
%% sample for unstructured abstract %%
%%==================================%%

\abstract{}
A density  matrix describes the statistical state of a quantum system. It is a powerful formalism to represent both the quantum and classical uncertainty of quantum systems and to express different statistical operations such as measurement, system combination and expectations as linear algebra operations. This paper explores how density matrices can be used as a building block for machine learning models exploiting their ability to straightforwardly combine linear algebra and probability. One of the main results of the paper is to show that density matrices coupled with random Fourier features could approximate arbitrary probability distributions over $\mathbb{R}^n$. Based on this finding the paper builds different models for density estimation, classification and regression. These models are differentiable, so it is possible to integrate them with other differentiable components, such as deep learning architectures and to learn their parameters using gradient-based optimization. In addition, the paper presents optimization-less training strategies based on estimation and model averaging. The models are evaluated in benchmark tasks and the results are reported and discussed.

\keywords{quantum machine learning, density matrix, density estimation, classification, regression}

\maketitle
%%\pacs[JEL Classification]{D8, H51}

%%\pacs[MSC Classification]{35A01, 65L10, 65L12, 65L20, 65L70}

%\maketitle
\section{Introduction}
\label{sec:introduction}

The formalism of density operators and density matrices was
developed by von Neumann as a foundation of quantum statistical mechanics  \citep{von1927wahrscheinlichkeitstheoretischer}. From the point of view of machine learning, density matrices have an interesting feature: the fact that they combine linear algebra and probability, two of the pillars of machine learning, in a very particular but powerful way.

The main question addressed by this work is how density matrices can be used in machine learning models. One of the main approaches to machine learning is to address the problem of learning as one of estimating a probability distribution from data: joint probabilities $P(x,y)$ in generative supervised models or conditional probabilities  $P(y\vert x)$ in discriminative models. 

The central idea of this work is to use density matrices to represent these probability distributions tackling the important question of how to encode arbitrary probability density functions in $\mathbb{R}^n$ into density matrices. 

The quantum probabilistic formalism of von Neumann is based on linear algebra, in contrast with classical probability which is based on set theory. In the quantum formalism the sample space corresponds to a Hilbert space $\mathcal{H}$ and the event space to a set of linear operators in $\mathcal{H}$, the density operators \citep{wilce2021quantum}.

The quantum formalism generalizes classical probability. A density matrix in an $n$-dimensional Hilbert space can be seen as a catalog of categorical distributions on the finite set $\{1\dots n\}$. A direct application of this fact is not very useful as we want to efficiently model continuous probability distributions in $\mathbb{R}^n$. One of the main results of this paper is to show that it is possible to model arbitrary probability distributions in $\mathbb{R}^n$ using density matrices of finite dimension in conjunction with random Fourier features \citep{rahimi2007rff}. In particular the paper presents a method for non-parametric density estimation that combines density matrices and random Fourier features to efficiently learn a probability density function from data and to efficiently predict the density of new samples.

The fact that the probability density function is represented in matrix form and that the density of a sample is calculated by linear algebra operations makes it easy to  implement the model in GPU-accelerated machine learning frameworks. This also facilitates using density matrices as a building block for classification and regression models, which can be trained using gradient-based optimization and can be easily integrated with conventional deep neural networks. The paper presents examples of these models and shows how they can be trained using gradient-based optimization as well as optimization-less learning based on estimation.

The paper is organized as follows: \Cref{sec:background} covers the background on kernel density estimation, random features,  and density matrices; \Cref{sec:methods} presents four different methods for density estimation, classification and regression; \Cref{sec:related_work} discusses some relevant works; \Cref{sec:experimental_evaluation} presents the experimental evaluation; finally, \Cref{sec:conclusions} discusses the conclusions of the work.

\section{Background and preliminaries}
\label{sec:background}

\subsection{Kernel density estimation}

 Kernel Density Estimation (KDE) \citep{rosenblatt1956, parzen1962estimation}, also known as Parzen-Rossenblat window method, is a non-parametric density estimation method. This method does not make any particular assumption about the underlying probability density function. Given an iid set of samples $X=\{x_1,\dots,x_N\}$, the smooth Parzen's window estimate has the form
 
 \begin{equation}\label{eq:Parzen}
     \hat{f}_\lambda(x) = \frac{1}{NM_{\lambda}}\sum_{i=1}^{N}k_{\lambda}(x,x_i),
 \end{equation}

where $k_{\lambda}(\cdot)$ is a kernel function, $\lambda$ is the smoothing bandwidth parameter of the estimate and $M_{\lambda}$ is a normalizing constant. A small  $\lambda$-parameter implies a small grade of smoothing. 

\citet{rosenblatt1956} and \citet{parzen1962estimation} showed that \cref{eq:Parzen} is an unbiased estimator of the pdf $f$. If $k_{\gamma}$ is the Gaussian kernel, \cref{eq:Parzen} takes the form

 \begin{equation}\label{eq:Parzen-Gaussian}
     \hat{f}_{\gamma}(x) = \frac{1}{NM_\gamma}
     \sum_{i=1}^{N}e^{-\gamma\|x_i-x\|^2},
 \end{equation}
where $M_\gamma = (\pi/\gamma)^{\frac{d}{2}}$.

KDE has several applications: to estimate the underlying probability density function, to estimate confidence intervals and confidence bands \citep{efron1992bootstrap, chernozhukov2014gaussian}, to find local modes for geometric feature estimation \citep{chazal2017robust, chen2016comprehensive}, to estimate ridge of the density function \citep{genovese2014nonparametric}, to build cluster trees \citep{balakrishnan2013cluster}, to estimate the cumulative distribution function \citep{nadaraya1964some}, to estimate receiver operating characteristic (ROC) curves \citep{mcneil1984statistical}, among others.

One of the main drawbacks of KDE is that it is a memory-based method, i.e. it requires the whole training set to do a prediction, which is linear on the training set size. This drawback is typically alleviated by methods that use data structures that support efficient nearest-neighbor queries. This approach still requires to store the whole training dataset.

\subsection{Random features}
\label{subsect:random_features}
Random Fourier features (RFF) \citep{rahimi2007rff} is a method that builds an embedding $\phi_{\mathrm{rff}}:\mathbb{R}^d\to\mathbb{R}^D$ given a shift-invariant kernel $k:\mathbb{R}^d\times\mathbb{R}^d \to \mathbb{R}$  such that $\forall x, y \in \mathbb{R}^d, \ k(x,y) \approx \langle\phi_{\mathrm{rff}}(x),\phi_{\mathrm{rff}}(y)\rangle = \phi_{\mathrm{rff}}^T(x)\phi_{\mathrm{rff}}(y)$. One of the main applications of RFF is to speedup kernel methods, being data independence one of its advantages.

The RFF method is based on the Bochner's theorem. In layman's terms, Bochner's theorem shows that a shift invariant positive-definite kernel $k(\cdot)$ is the Fourier transform of a probability measure $p(w)$. \citet{rahimi2007rff} use this result to approximate the kernel function by designing a sample procedure that estimates the integral of the Fourier transform. The first step is to draw $D$ iid samples $\{w_1,\dots w_D\}$ from $p$ and $D$ iid samples $\{b_1,\dots b_D\}$ from a uniform distribution in $[0,2\pi]$. Then, define:
\begin{align}
\begin{aligned} \label{eq:phi_rff}
\phi_{\text{rff}}:  \mathbb{R}^d &\to \mathbb{R}^D  \\
       x &\mapsto \sqrt{\frac{2}{D}}(\cos\left(w_1^Tx+b_1), \dots, \cos(w_D^Tx+b_D)\right).
\end{aligned}
\end{align}
 
\citet{rahimi2007rff} showed that the expected value of  $\phi_{\mathrm{rff}}^T(x)\phi_{\mathrm{rff}}(y)$ uniformly converges to $k(x,y)$:

\begin{theorem}{\citep{rahimi2007rff}} \label{thm:rff}
    Let $\mathcal{M}$ be a compact subset of $\mathbb{R}^d$ with a diameter $\text{diam}(\mathcal{M})$. Then for the mapping $\phi_{\mathrm{rff}}$ defined above, we have
    
      \begin{align}
      \mathrm{Pr} \left[
      \sup_{x,y \in \mathcal{M}} \vert\phi_{\mathrm{rff}}^T(x)\phi_{\mathrm{rff}}(y) -k(x,y)\vert
      \ge \epsilon \right] \le \nonumber \\
      2^8\left(\frac{\sigma_p \mathrm{diam}(\mathcal{M})}{\epsilon}\right)^2
      \exp\left(-\frac{D\epsilon^2}{4(d+2)}\right),
      \end{align}
      
      where, $\sigma^2_p$ is the second momentum of the Fourier transform of $k$. In particular, for the Gaussian kernel $\sigma^2_p = 2d\gamma$, where $\gamma$ is the spread parameter (see Eq. \ref{eq:Parzen-Gaussian}).
\end{theorem}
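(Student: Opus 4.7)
My plan follows the classical Hoeffding-plus-$\epsilon$-net template for uniform convergence. Set $f(x,y) := \phi_{\mathrm{rff}}^T(x)\phi_{\mathrm{rff}}(y) - k(x,y)$, and write $\phi_{\mathrm{rff}}^T(x)\phi_{\mathrm{rff}}(y) = \tfrac{1}{D}\sum_{i=1}^D Z_i$ with $Z_i := 2\cos(w_i^Tx+b_i)\cos(w_i^Ty+b_i) \in [-2,2]$. The product-to-sum identity combined with the uniformity of $b_i$ on $[0,2\pi]$ yields $\mathbb{E}[Z_i] = \mathbb{E}_w[\cos(w^T(x-y))] = k(x,y)$ by Bochner's theorem, so $\mathbb{E}[f(x,y)]=0$. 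Because the $Z_i$ are iid and bounded, Hoeffding's inequality gives the pointwise tail
\begin{equation*}
\mathrm{Pr}\bigl[|f(x,y)| \ge \epsilon/2\bigr] \le 2\exp\bigl(-D\epsilon^2/32\bigr).
\end{equation*}

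The second ingredient is control of the smoothness of $f$ on the compact set $\mathcal{M}\times\mathcal{M}$. A direct computation gives $\|\nabla_{(x,y)} Z_i\| = O(\|w_i\|)$, while the gradient of the shift-invariant kernel $k$ is controlled via Fourier inversion by moments of $w$. Hence the Lipschitz constant $L_f$ of $f$ satisfies $\mathbb{E}[L_f^2] = O(\mathbb{E}\|w\|^2) = O(\sigma_p^2)$, and Markov's inequality yields $\mathrm{Pr}[L_f \ge s] \le O(\sigma_p^2/s^2)$.

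The final step is the covering argument. Place an $r$-net $\mathcal{N}$ of cardinality at most $\bigl(C\,\mathrm{diam}(\mathcal{M})/r\bigr)^{2d}$ on $\mathcal{M}\times\mathcal{M}$ (or, exploiting the shift-invariant structure, an $r$-net on the Minkowski difference $\mathcal{M}-\mathcal{M}$ with $\bigl(C\,\mathrm{diam}(\mathcal{M})/r\bigr)^{d}$ points); apply the pointwise Hoeffding bound with a union bound over $\mathcal{N}$; and extend the estimate to all of $\mathcal{M}\times\mathcal{M}$ via $|f(x,y)| \le |f(x',y')| + L_f r$ at the nearest net point $(x',y')$. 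Choosing $r$ so that $L_f r \le \epsilon/2$ on the high-probability Lipschitz event and optimizing this choice against the Hoeffding rate (trading the $r^{-d}$ covering factor against $\exp(-D\epsilon^2/c)$) yields the claimed prefactor $2^8(\sigma_p\,\mathrm{diam}(\mathcal{M})/\epsilon)^2$; the denominator $d+2$ in the exponent emerges from absorbing a $\log(1/r)$ term into the Hoeffding exponent at the optimal choice $r \sim \epsilon/\sigma_p$.

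The conceptual steps—Hoeffding, Markov on a Lipschitz tail, $\epsilon$-net, union bound—are all standard, and each is routine in isolation. The main obstacle is the bookkeeping needed to recover the precise constants: tracking how the optimization over $r$ redistributes the exponential rate between the $r^{-d}$ covering factor and the Hoeffding exponent so as to land on exactly $D\epsilon^2/(4(d+2))$ rather than the looser $D\epsilon^2/c$ one obtains by any suboptimal choice of $r$.
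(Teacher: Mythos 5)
The paper does not actually prove this theorem: it is quoted from \citet{rahimi2007rff} and used as a black box (the supplementary material proves only \cref{prop:DMKDE-approximation}, which invokes \cref{thm:rff} as given). So the comparison here is against the original Rahimi--Recht argument, and your sketch reproduces it faithfully: the product-to-sum identity turning $\phi_{\mathrm{rff}}^T(x)\phi_{\mathrm{rff}}(y)$ into an average of iid bounded summands with mean $k(x,y)$ via Bochner, the pointwise Hoeffding bound, the estimate $\mathbb{E}[L_f^2]\le \mathbb{E}\|w\|^2=\sigma_p^2$ followed by Markov, the $r$-net on the difference set $\mathcal{M}-\mathcal{M}$ (exactly the route the original proof takes, exploiting shift invariance to cover a $d$-dimensional rather than $2d$-dimensional set), and the optimization over $r$ that balances the $r^{-d}$ covering count against the $r^2$ Lipschitz tail and produces the exponent $d+2$. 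One caveat on constants, which you partly anticipate but do not fully resolve: for the single-cosine embedding of \cref{eq:phi_rff} the summands $Z_i$ have range $4$, so Hoeffding gives the pointwise rate $2\exp(-D\epsilon^2/32)$ that you correctly state; pushing that rate through the $r$-optimization yields a final exponent of $-D\epsilon^2/(16(d+2))$, not the $-D\epsilon^2/(4(d+2))$ claimed in the theorem. The advertised constant is obtained only for the paired sine--cosine embedding, whose summands have range $2$ and pointwise rate $2\exp(-D\epsilon^2/8)$. This mismatch is a known looseness in the original Rahimi--Recht statement (analyzed in detail by Sutherland and Schneider, 2015), not a defect introduced by your argument, but as written your choice of pointwise bound cannot land on the stated constant.
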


Different approaches to compute random features for kernel approximation have been proposed based on different strategies: Monte Carlo sampling \citep{quocle2013fastfood, lee2016orthogonal}, quasi-Monte-Carlo sampling \citep{avron2016quasi, shen2017random}, and quadrature rules \citep{dao2017gaussian}.

RFF may be used to formulate a non-memory based version of KDE. For the Gaussian kernel we have:

\begin{align} \label{eq:phi_density_estimation}
    \hat{f}_{\gamma}(x) 
    & = \frac{1}{NM_\gamma}
     \sum_{i=1}^{N}k_\gamma(x_i, x) \nonumber\\
    & \approx  \frac{1}{NM_\gamma}
     \sum_{i=1}^{N}\langle \phi_{\mathrm{rff}}(x_i),\phi_{\mathrm{rff}}(x) \rangle \nonumber\\
    & =  \frac{1}{M_\gamma}
     \bigg \langle \frac{1}{N} \sum_{i=1}^{N} \phi_{\mathrm{rff}}(x_i),\phi_{\mathrm{rff}}(x) \bigg \rangle \nonumber\\
    & =  \frac{1}{M_\gamma}
     \langle \Phi_{\mathrm{train}},\phi_{\mathrm{rff}}(x)  \rangle \nonumber\\
        & =  \frac{1}{M_\gamma}
     \Phi_{\mathrm{train}}^T\phi_{\mathrm{rff}}(x)   
\end{align}

$\Phi_{\mathrm{train}}$ in \cref{eq:phi_density_estimation} can be efficiently calculated during training time, since  is just an average of the RFF embeddings of the training samples. The time complexity of prediction, \cref{eq:phi_density_estimation}, is constant on the size of the training dataset. The price of this efficiency improvement is a loss in precision, since we are using an approximation of the Gaussian kernel.

\section{Density estimation with density matrices}\label{subsect:density_estimation_with_DM}

The Gaussian kernel satisfy $\forall x,y \in \mathbb{R}^d,  k_\gamma(x,y) > 0$, however the RFF estimation may be negative. To alleviate this we could estimate the square of the kernel and use the fact that $k_{\gamma}(x,y) = k^2_{\gamma/2}(x,y)$. In this case we have:

\begin{align} \label{eq:f_hat_rho_train}
    \hat{f}_{\gamma}(x) 
    & = \frac{1}{NM_\gamma}
     \sum_{i=1}^{N} k_\gamma(x_i, x) \nonumber\\
    & = \frac{1}{NM_\gamma}
     \sum_{i=1}^{N} k^2_{\gamma/2}(x_i, x) \nonumber\\
    & \approx  \frac{1}{NM_\gamma}
     \sum_{i=1}^{N}\langle \phi_{\mathrm{rff}}(x_i),\phi_{\mathrm{rff}}(x) \rangle ^2 \nonumber\\
    & =  \frac{1}{NM_\gamma}
     \sum_{i=1}^{N}\langle \phi_{\mathrm{rff}}(x),\phi_{\mathrm{rff}}(x_i) \rangle \langle \phi_{\mathrm{rff}}(x_i),\phi_{\mathrm{rff}}(x) \rangle \nonumber\\
    & =  \frac{1}{NM_\gamma}
     \sum_{i=1}^{N} \phi_{\mathrm{rff}}^T(x)\phi_{\mathrm{rff}}(x_i)  \phi_{\mathrm{rff}}^T(x_i)\phi_{\mathrm{rff}}(x)  \nonumber\\
     & =  \frac{1}{M_\gamma}
      \phi_{\mathrm{rff}}^T(x)
     \left( \frac{1}{N}\sum_{i=1}^{N} \phi_{\mathrm{rff}}(x_i)  \phi_{\mathrm{rff}}^T(x_i) \right)
     \phi_{\mathrm{rff}}(x)  \nonumber\\
    & =  \frac{1}{M_\gamma}
      \phi_{\mathrm{rff}}^T(x)
     \rho_{\mathrm{train}}
     \phi_{\mathrm{rff}}(x) =:  \hat{f}_{\rho_{\mathrm{train}}}(x) 
\end{align}

In \cref{eq:f_hat_rho_train} it is important to take into account that the parameters of the RFF embedding, $\phi_{\mathrm{rff}}$, are sampled using a parameter $\gamma/2$ for the Gaussian kernel.

The following proposition shows that $\hat{f}_{\rho_{\mathrm{train}}}$, as defined in \cref{eq:f_hat_rho_train}, uniformly converges to the Gaussian kernel Parzen's estimator $\hat{f}_{\gamma}$ (\cref{eq:Parzen-Gaussian}).

\begin{proposition}
\label{prop:DMKDE-approximation}
    Let $\mathcal{M}$ be a compact subset of $\mathbb{R}^d$ with a diameter $\text{diam}(\mathcal{M})$, let $X=\{x_i\}_{i=1\dots N}\subset \mathcal{M}$ a set of iid samples,  then  $\hat{f}_{\rho_{\mathrm{train}}}$ (\cref{eq:f_hat_rho_train}) and $\hat{f}_{\gamma}$  satisfy:
    
      \begin{align}
      \mathrm{Pr} &\left[
      \sup_{x \in \mathcal{M}} 
      \vert\hat{f}_{\rho_{\mathrm{train}}}(x) - \hat{f}_{\gamma}(x)\vert 
      \ge \epsilon 
      \right] \le \nonumber\\
      &2^8\left(\frac{\sqrt{2d\gamma} \mathrm{diam}(\mathcal{M})}{3M_{\gamma}\epsilon}\right)^2
      \exp\left(-\frac{D(3M_{\gamma}\epsilon)^2}{4(d+2)}\right)
      \end{align}
\end{proposition}
\begin{proof}
  (see Apendix \ref{ap:proofs})
\end{proof}

The Parzen's estimator is an unbiased estimator of the true density function from which the samples were generated and Proposition \ref{prop:DMKDE-approximation} shows that  $\hat{f}_{\rho_{\mathrm{train}}}(x)$ can approximate this estimator. 

A further improvement to the $\hat{f}_{\rho_{\mathrm{train}}}(x)$  estimator is to normalize the RFF embedding as follows: 

\begin{align}\label{eq:normalized_rff}
    \ket{\bar{\phi}_{\mathrm{rff}}(x)} = \frac{\phi_{\mathrm{rff}}(x)}{\Vert\phi_{\mathrm{rff}}(x)\Vert}
\end{align}

Here we use the Dirac notation to emphasize the fact that $\bar{\phi}_{\mathrm{rff}}$ is a quantum feature map. This has the effect that the estimation $k_\gamma(x,x) = \braket{\bar{\phi}_{\mathrm{rff}}(x)}  {\bar{\phi}_{\mathrm{rff}}(x)}=1$ will be exact and $\forall x,y \in \mathbb{R}^d,  \braket{\bar{\phi}_{\mathrm{rff}}(x)} {\bar{\phi}_{\mathrm{rff}}(y)} \le 1$. 

During the training phase $\rho_{\mathrm{train}}$ is estimated as the average of the cross product of the normalized RFF embeddings of the training samples:

\begin{align}\label{eq:rho_train}
    \rho_{\mathrm{train}} = \frac{1}{N}\sum_{i=1}^{N} \ket{\bar{\phi}_{\mathrm{rff}}(x_i)} \bra{\bar{\phi}_{\mathrm{rff}}(x_i)}
\end{align}

The time complexity of calculating $\rho_{\mathrm{train}}$ is $O(D^2N)$, i.e. linear on the size of the training dataset. One advantage over conventional KDE is that we do not need to store the whole training dataset, but a more compact representation.

During the prediction phase the density of a new sample is calculated as:

\begin{align}\label{eq:f_hat_rho_train_normal}
    \hat{f}_{\rho_{\mathrm{train}}}(x) = \frac{1}{M_\gamma}
     \bra{\bar{\phi}_{\mathrm{rff}}(x)}
     \rho_{\mathrm{train}}
     \ket{\bar{\phi}_{\mathrm{rff}}(x)}
\end{align}

The $\hat{f}_{\rho_{\mathrm{train}}}$  estimator has an important advantage over the Parzen's estimator, its computational complexity. The time to calculate the Parzen's estimator (\cref{eq:Parzen-Gaussian}) is $O(dN)$ while the time to estimate the density based on the density matrix $\rho_{\mathrm{train}}$ (\cref{eq:f_hat_rho_train_normal}) is $O(D^2)$, which is constant on the size of the training dataset.

The $\rho_{\mathrm{train}}$ matrix in \cref{eq:rho_train} is a well known mathematical object in quantum mechanics, a density matrix, and  \cref{eq:f_hat_rho_train_normal} is an instance of the Born rule which calculates the probability that a measurement of a quantum system produces a particular result. This connection and the basic ideas behind density matrices are discussed in the next section. 

\section{Density matrices}\label{subsect:density_matrices}

This section introduces some basic mathematical concepts that are part of the mathematical framework that supports quantum mechanics and discusses their connection with the ideas introduced in the previous subsection. The contents of this section are not necessary for understanding the rest of the paper and are included to better explain the connection of the ideas presented in this paper with the quantum mechanics mathematical framework.

The state of a quantum system is represented by a vector $\psi \in \mathcal{H}$, where $\mathcal{H}$ is the Hilbert space of the possible states of the system. Usually\footnote{In this paper we use $\mathcal{H} = \mathbb{R}^d$, but most of the methods and results can be extended to the complex case.} $\mathcal{H} = \mathbb{C}^d$. 

As an example, consider a system that could be in two possible states, e.g. the spin of an electron that could be up ($\uparrow$) or down ($\downarrow$) with respect to some axis $z$. The state of this system is, in general, represented by a regular column vector $\ket{\psi} =(\alpha, \beta)$, with $\vert\alpha\vert^2 + \vert\beta\vert^2 = 1$. This state represents a system that is in a superposition of the two basis states $\ket{\psi} = \alpha \uparrow + \beta \downarrow$. The outcome of a measurement of this system, along the $z$ axis, is determined by the Born rule: the spin is up with probability $\vert\alpha\vert^2$ and down with probability $\vert\beta\vert^2$. Notice that $\alpha$ and $\beta$ could be negative or complex numbers, but the Born rule guarantees that we get valid probabilities.

The normalized RFF mapping (\cref{eq:normalized_rff}) can be seen as a function that maps a sample to the state of a quantum system. In quantum machine learning literature, there are different approaches to encode data in quantum states \citep{schuld2018supervised}. The use of RFF as a data quantum encoding strategy was first proposed by \citep{Gonzalez2020, gonzalez2021classification}.

The probabilities that arise from the superposition of states in the previous example is a manifestation of the uncertainty that is inherent to the nature of quantum physical systems. We call this kind of uncertainty \emph{quantum uncertainty}. Other kind of uncertainty comes, for instance, from errors in the measurement or state-preparation processes, we call this uncertainty \emph{classical uncertainty}. A density matrix is a formalism that allows us to represent both types of uncertainty. To illustrate it, let's go back to our previous example. The density matrix representing the state $\psi$ is:

\begin{equation}
    \rho = \ket{\psi} \bra{\psi} = 
    \begin{bmatrix}
        \vert\alpha\vert^2 & \alpha\beta^* \\
        \beta\alpha^* & \vert\beta\vert^2
    \end{bmatrix},
\end{equation}
As a concrete example, consider $\bra{\psi_1} = \left( \frac{1}{\sqrt{2}}, -\frac{1}{\sqrt{2}}\right)$ the corresponding density matrix is:

\begin{equation}
    \rho_1 = \ket{\psi_1} \bra{\psi_1} = 
    \begin{bmatrix}
        \frac{1}{2} & -\frac{1}{2} \\
        -\frac{1}{2} & \frac{1}{2}
    \end{bmatrix},
\end{equation}
which represents a superposition state where we have a $\frac{1}{2}$ probability of measuring any of the two states. Notice that the probabilities for each state are in the diagonal of the density matrix. $\rho_1$ is a rank-1 density matrix, and this means that it represents a pure state. A mixed state, i.e. a state with classical uncertainty, is represented by a density matrix with the form:

\begin{equation}
    \rho = \sum_{i=1}^N p_i \ket{\psi_i} \bra{\psi_i},
\end{equation}
where $p_i > 0 \in\mathbb{R}$, $\sum_{i=1}^N p_i=1$, and  $\{\psi_i\}_{i=1\dots N}$ are the states of a an ensemble of $N$ quantum systems, where each system has an associated probability $p_i$. The matrix $\rho_{\mathrm{train}}$ in \cref{eq:rho_train} is in fact a density matrix that represents the state of an ensemble of quantum systems where each system corresponds to a training data sample. The probability is the same for all the $N$ elements of the ensemble, $\frac{1}{N}$.

As a concrete example of a mixed state consider two pure states $\psi_2 =(1,0)$ and $\psi'_2 =(0,1)$, and consider a system that is prepared in state $\psi_2$ with probability $\frac{1}{2}$ and in state $\psi'_2$ with probability $\frac{1}{2}$ as well. The state of this system is represented by the following density matrix:
\begin{equation}
    \rho_2 = \frac{1}{2}\ket{\psi_2} \bra{\psi_2} + \frac{1}{2}\ket{\psi'_2} \bra{\psi_2^{\prime}}=
    \begin{bmatrix}
        \frac{1}{2} & 0 \\
        0 & \frac{1}{2}
    \end{bmatrix},
\end{equation}

At first sight, states $\rho_1$ and $\rho_2$ may be seen as representing the same quantum system, one where the probability of measuring an up state (or down state) in the $z$ axis is $\frac{1}{2}$. However they are different systems, $\rho_1$ represents a system with only quantum uncertainty, while $\rho_2$ corresponds a system with classical uncertainty. To better observe the differences of the two systems we have to perform a measurement along a particular axis. To do so, we use the following version of the Born rule for density matrices:
\begin{equation}
    P(\varphi\vert\rho) = \text{Tr}(\rho\ket{\varphi}\bra{\varphi})= \bra{\varphi}\rho\ket{\varphi}
\end{equation}
which calculates the probability of measuring the state $\varphi$ in a system in state $\rho$. If we set $\varphi=\left( \frac{1}{\sqrt{2}}, -\frac{1}{\sqrt{2}}\right)$ we get $P(\varphi\vert\rho_1)=1$ and $P(\varphi\vert\rho_2)=\frac{1}{2}$, showing that in fact both systems are different.

% \textcolor{red}{Validmir, please summarize the state of the art in one paragraph if possible. And add one more paragraph emphasizing what is new in our work.}

% The fact that we measure the spin to be up with probability $|\alpha|^2$ is independent from the precision of the measurement. Nonetheless, in the real world we do not have infinitely precise measurement apparatuses, nor can we prepare physical systems to be at Thus, we measure the spin up with a probability close% Before introducing this concept, let us discuss the usual or classical uncertainty. In the classical world, when we perform a measurement we have a degree of uncertainty due to the precision of our measurement apparatuses, calibration issues, among others [cite book Fajardo]. In other words, we assume that there is a hidden variable, a ground truth that we can measure up to some certainty degree, allowing us to estimate the value of such hidden variable.

% On the other hand, multiple experiments at the end of the XIX century and beginnings of the XX century [cite a QM book] showed that classical theory was unable to predict their outcomes. It seemed that there was a probabilistic nature inherent to microscopic systems involved in these experiments. Therefore, the outcome of ...

\section{Methods}
\label{sec:methods}
\subsection{Density matrix kernel density estimation (DMKDE)}
\label{subsec:DMKDE}

In this subsection we present a model for non-parametric density estimation based on the ideas discussed in subsection \ref{subsect:density_estimation_with_DM}. The model receives an input $x \in \mathbb{R}^d$, represents it using a RFF quantum feature map (\cref{eq:phi_rff})  and estimates the density of it using \cref{eq:f_hat_rho_train_normal}. The model can be trained by averaging the density matrices corresponding to the training samples or by using stochastic gradient descent. The second approach requires a re-parametrization of the model that we discuss next.   

The main parameter of the model is $\rho_{\mathrm{train}}$, which is a Hermitian matrix. To ensure this property, we can represent it using a factorization as follows:

\begin{equation}\label{eq:rho_train_factorization}
    \rho_{\mathrm{train}} = V^{T} \Lambda V,
\end{equation}
where $V \in \mathbb{R}^{r \times D}$, $\Lambda \in \mathbb{R}^{r \times r}$ is a diagonal matrix and $r<D$ is the reduced rank of the factorization. With this new representation, \cref{eq:f_hat_rho_train_normal} can be re-expressed as:

\begin{equation}\label{eq:f_hat_rho_factor}
    \hat{f}_{\rho_{\mathrm{train}}}(x)=\frac{1}{M_\gamma}
    \|\Lambda^{\frac{1}{2}}V\bar{\phi}_{\mathrm{rff}}(x)\|^2.
\end{equation}

This reduces the time to calculate the density of a new sample to $O(Dr)$.

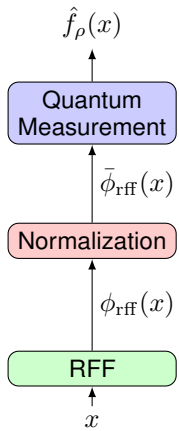
\begin{figure}[tbh]
    \centering
        \begin{tikzpicture}[>=latex, node distance=1.7cm]
        % Nodes
            \node (x) {$x$};
            \node (RFF) [rff, above of=x, yshift=-1cm] {RFF};
            \node (normal) [normalization, above of=RFF] 
            {Normalization};
            \node (QM) [qm, above of= normal] {Quantum Measurement};
            \node (out) [above of= QM, yshift= -.5cm] {$\hat{f}_\rho(x)$};
        % arcs
            \draw[->] (x) -- (RFF);
            \draw[->] (RFF) -- node [xshift=0.6cm] {$\phi_\mathrm{rff}(x)$} (normal);
            \draw[->] (normal) -- node [xshift=0.6cm] {$\bar{\phi}_\mathrm{rff}(x)$} (QM);
            \draw[->] (QM) -- (out);
        \end{tikzpicture}
    \caption{Density matrix kernel density estimation (DMKDE).} \label{fig:DMKDE}
\end{figure}

The model is depicted in Fig. \ref{fig:DMKDE} and its function is governed by the following equations:

\begin{subequations} \label{eq:DMKDE}
\begin{align} 
    z  &   := \phi_{\text{rff}}(x)=\cos(W_{\mathrm{rff}}x+b_{\mathrm{rff}}) \label{eq:DMKDE-1},\\
    z' &   := \frac{z}{\|z\|},\label{eq:DMKDE-2} \\
    \tilde{y} &  := \frac{1}{M_\gamma}\|\Lambda^{\frac{1}{2}}Vz'\|^2 \label{eq:DMKDE-3}
\end{align}
\end{subequations}

The hyperparameters of the model are the dimension of the RFF representation $D$, the spread parameter $\gamma$ of the Gaussian kernel and the rank $r$ of the density matrix factorization. The parameters are the weights and biases of the RFF, $W_{\mathrm{rff}}\in \mathbb{R}^{D\times d}$ and $b_{\mathrm{rff}}\in \mathbb{R}^d$ (corresponding to the $w_i$ and $b_i$ parameters in Eq. \ref{eq:phi_rff}), and the components of the factorization, $V\in \mathbb{R}^{r \times D}$ and $\lambda\in \mathbb{R}^r$, the vector with the elements in the diagonal of $\Lambda$.

The training process of the model is as follows:

\begin{enumerate}
    \item Input. A sample set $X=\{ x_i\}_{i=1\dots N}$ with $x_i \in \mathbb{R}^d$, parameters $\gamma \in \mathbb{R}^+$, $D \in \mathbb{N}$
    \item Calculate $W_{\mathrm{rff}} = [w_1 \dots w_D]$ and $b_{\mathrm{rff}} = [b_1 \dots b_D]$ using the random Fourier features method described in \Cref{subsect:random_features} for approximating a Gaussian kernel with parameters $\gamma/2$ and $D$.
    \item Apply $\bar{\phi}_{\text{rff}}$ (\cref{eq:normalized_rff}):  
    \begin{equation}\label{eq:z_i}
        z_i = \bar{\phi}_{\text{rff}}(x_i).
    \end{equation}
    \item Estimate $\rho_{\mathrm{train}}$:
        \begin{equation}\label{eq:rho_density_estimation}
            \rho_{\mathrm{train}} = \frac{1}{N}\sum_{i=1}^N z_i z_i^T,
        \end{equation}
    \item Make a spectral decomposition of rank $r$ of $\rho_{\mathrm{train}}$:
        \begin{equation*}
            \rho_{\mathrm{train}} = V^T\Lambda V.
        \end{equation*}
\end{enumerate}

Notice that this training procedure does not require any kind of iterative optimization. The training samples are only used once and the time complexity of the algorithm is linear on the number of training samples. The complexity of step 4 is $O(D^2N)$ and of step 5 is $O(D^3)$.

Since the operations defined in \cref{eq:DMKDE} are differentiable, it is possible to use gradient-descent to minimize an appropriate loss function. For instance, we can minimize the negative log-likelihood:
\begin{equation}
    L = -\sum_{i=1}^K \log(\tilde{y})
\end{equation}
In contrast with the learning procedure based on density matrix estimation, using SGD does not guarantee that we will approximate the real density function. If we train all the parameters, maximizing the likelihood becomes an ill-posed problem because of singularities (a Gaussian with arbitrary small variance centered in one training point) \citep{bishop2006pattern}. Keeping fixed the RFF parameters and optimizing the parameters of the density matrix, $V$ and $\lambda$ has shown a good experimental performance. The version of the model trained with gradient descent is called DMKDE-SGD.  

Something interesting to notice is that the process described by \cref{eq:z_i,eq:rho_density_estimation} generalizes density estimation for variables with a categorical distribution, i.e. $x\in\{1,\dots,K\}$. To see this, we replace $\bar{\phi}_{\mathrm{rff}}$ in \cref{eq:z_i} by the well-known one-hot-encoding feature map:

\begin{align} \label{eq:phi_ohe}
\begin{aligned}
\phi_{\text{ohe}}:  D &\to \mathbb{R}^K\\
       i &\mapsto E_i,
\end{aligned}
\end{align}
where $E_i$ is the unit vector with a 1 in position $i$ and 0 in the other positions. It is not difficult to see that in this case
\begin{equation}
    \rho_{ii} = \text{Pr}(x=i) = \frac{\vert\{x_j\vert j \in \{1,\dots,N\}, x_j=i\}\vert}{N}.
\end{equation}

\subsection{Density matrix kernel density classification (DMKDC)}
\label{subsec:DMKDC}
The extension of kernel density estimation to classification is called kernel density classification \citep{hastie2009elements}. The posterior probability is calculated as
\begin{equation}
    \hat{\mathrm{Pr}}(Y=j\vert X=x)=\frac{\pi_j \hat{f}_j(x)}
    {\sum_{k=1}^K\pi_k \hat{f}_k(x)},
\end{equation}
where $\pi_j$ and $\hat{f}_j$ are respectively the class prior and the density estimator of class $j$.

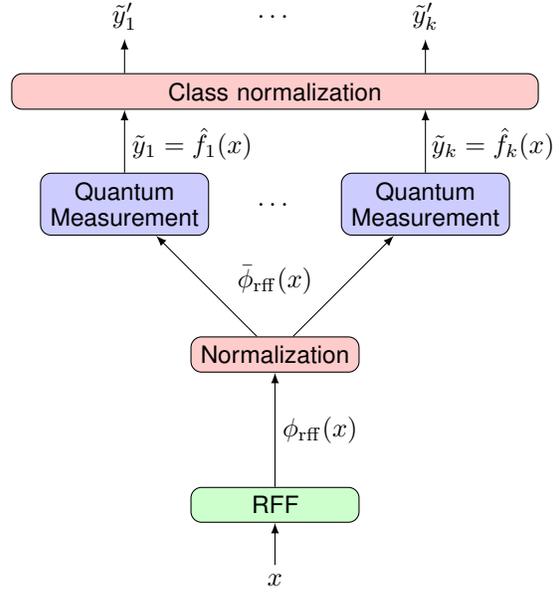
\begin{figure}[tbh]
    \centering
            \begin{tikzpicture}[>=latex,node distance=2 cm]
        % Nodes
            \node (QM_1) [qm] {Quantum Measurement};
            \node (out_qm_1) [above of= QM_1, yshift= -1.2cm, xshift= 0.9cm]  {$\tilde{y}_1 = \hat{f}_1(x)$};
            \node (dots_1) [right of=QM_1] {$\dots$};
            \node (dots_2) [above of=dots_1, yshift=.5cm] {$\dots$};
            \node (QM_k) [qm, right of=dots_1] {Quantum Measurement};
            \node (out_qm_k) [above of= QM_k, yshift= -1.2cm, xshift= 0.9cm]  {$\tilde{y}_k = \hat{f}_k(x)$};
            \node (normal) [normalization, below of=dots_1] {Normalization};
            \node (normal_phi) [above of= normal, yshift= -1cm] {$\bar{\phi}_\mathrm{rff}(x)$} (QM);
            \node (RFF) [rff, below of=normal] {RFF};
            \node (x) [below of=RFF, yshift=1cm] {$x$};
            \node (class_normal) [normalization, 
                                  minimum width=7cm, 
                                  text width=4cm, 
                                  above of=dots_1,
                                  yshift=-0.5cm] {Class normalization};
            \node (out_1) [above of=QM_1, yshift=.5cm] {$\tilde{y}'_1$};
            \node (out_k) [above of=QM_k, yshift=.5cm] {$\tilde{y}'_k$};
        % arcs
            \draw[->] (x) -- (RFF);
            \draw[->] (RFF) -- node [xshift=0.6cm] {$\phi_\mathrm{rff}(x)$} (normal);
            \draw[->] (normal) -- (QM_1);
            \draw[->] (normal) -- (QM_k);
            \draw[->] (QM_1) -- (QM_1 |- class_normal.south);
            \draw[->] (QM_k) -- (QM_k |- class_normal.south);
            \draw[->] (QM_1 |- class_normal.north) -- (out_1);
            \draw[->] (QM_k |- class_normal.north) -- (out_k);
        
        \end{tikzpicture}
    \caption{Density matrix kernel density classification (DMKDC).} \label{fig:DMKDC}
\end{figure}

We follow this approach to define a classification model that uses the density estimation strategy based on RFF and density matrices described in the previous section. The input to the model is a vector $x \in \mathbb{R}^d$. The model is depicted in Fig. \ref{fig:DMKDC} and defined by the following equations:
\begin{subequations}
\begin{align}
    z  &   := \phi_{\text{rff}}(x)=\cos(W_{\mathrm{rff}}x+b_{\mathrm{rff}}) \label{eq:DMKDC-1},\\
    z' &   := \frac{z}{\|z\|},\label{eq:DMKDC-3} \\
    \tilde{y}_i &  := \|\Lambda_i^{\frac{1}{2}}V_iz'\|^2 \ \ \forall i=1\dots K,\label{eq:DMKDC-4} \\
    \tilde{y}'_i & := \frac{\pi_i \tilde{y}_i}{\sum_{j=i}^K \tilde{y}_j} \ \ \forall i=1\dots K, \label{eq:DMKDC-2}
\end{align}
\end{subequations}

The hyperparameters of the model are the dimension of the RFF representation $D$, the spread parameter $\gamma$ of the Gaussian kernel, the class priors $\pi_i$ and the rank $r$ of the density matrix factorization. The parameters are the weights and biases of the RFF, $W_{\mathrm{rff}}\in \mathbb{R}^{D\times d}$ and $b_{\mathrm{rff}}\in \mathbb{R}^d$, and the components of the factorization, $V_i\in \mathbb{R}^{r \times D}$ and $\lambda_i\in \mathbb{R}$ for $i=1\dots K$.

The model can be trained using two different strategies: one, using DMKDE to estimate the density matrices of each class; two, use stochastic gradient descent over the parameters to minimize an appropriate loss function. 

The training process based on density matrix estimation is as follows:
\begin{enumerate}
    \item Use the RFF method to calculate $W_{\mathrm{rff}}$ and $b_{\mathrm{rff}}$.
    \item For each class $i$:
    \begin{enumerate}
        \item Estimate $\pi_i$ as the relative frequency of the class $i$ in the dataset.
        \item Estimate $\rho_i$ using \cref{eq:rho_density_estimation} and the training samples from class $i$.
        \item Find a factorization of rank $r$ of $\rho_i$:
        \begin{equation*}
            \rho_i = V_i^T\Lambda V_i.
        \end{equation*}
    \end{enumerate}
\end{enumerate}

Notice that this training procedure does not require any kind of iterative optimization. The training samples are only used once and the time complexity of the algorithm is linear on the number of training samples. The complexity of step 2(b) is $O(D^2N)$ and of 2(c) is $O(D^3)$.

Since the operations defined in \cref{eq:DMKDC-1,eq:DMKDC-3,eq:DMKDC-4,eq:DMKDC-2} are differentiable, it is possible to use gradient-descent to minimize an appropriate loss function. For instance, we can use categorical cross entropy:
\begin{equation}
    L = \sum_{i=1}^K y_i\log(\tilde{y}'_i)
\end{equation}
where $y=(y_1,\dots,y_K)$ corresponds to the one-hot-encoding of the real label of the sample $x$. The version of the model trained with gradient descent is called DMKDC-SGD. 

An advantage of this approach is that the model can be jointly trained with other differentiable architecture such as a deep learning feature extractor.

\subsection{Quantum measurement classification (QMC)}
\label{subsec:QMC}
In DMKDC we assume a categorical distribution for the output variable. If we want a more general probability distribution we need to define a more general classification model. The idea is to model the joint probability of inputs and outputs using a density matrix. This density matrix represents the state of a bipartite system whose representation space is $\mathcal{H}_{\mathcal{X}} \otimes \mathcal{H}_{\mathcal{Y}}$ where $\mathcal{H}_{\mathcal{X}}$ is the representation space of the inputs, $\mathcal{H}_{\mathcal{Y}}$ is the representation space of the outputs and $\otimes$ is the tensor product. A prediction is made by performing a  measurement with an operator specifically prepared from a new input sample. 

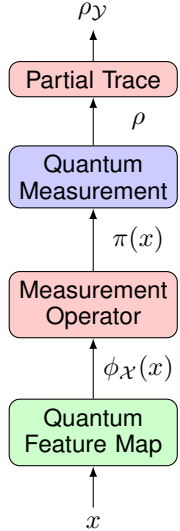
\begin{figure}[tbh]
    \centering
        \begin{tikzpicture}[>=latex, node distance=1.7cm]
        % Nodes
            \node (x) {$x$};
            \node (q_feature_map) [rff, above of=x, yshift=-.5cm] {Quantum Feature Map};
            \node (m_operator) [normalization, above of=q_feature_map] 
            {Measurement Operator};
            \node (QM) [qm, above of= m_operator] {Quantum Measurement};
            \node (partial_trace) [normalization, above of=QM, yshift= -.4cm] 
            {Partial Trace};
            \node (out) [above of= partial_trace, yshift= -.8cm] {$\rho_{{\mathcal{Y}}}$};
        % arcs
            \draw[->] (x) -- (q_feature_map);
            \draw[->] (q_feature_map) -- node [xshift=0.6cm] {$\phi_{\mathcal{X}}(x)$} (m_operator);
            \draw[->] (m_operator) -- node [xshift=0.6cm] {$\pi(x)$} (QM);
            \draw[->] (QM) -- node [xshift=0.6cm] {$\rho$} (partial_trace);
            \draw[->] (partial_trace) -- (out);
        \end{tikzpicture}
    \caption{Quantum measurement classification (QMC).} \label{fig:QMC}
\end{figure}

This model is based on the one described by \citet{Gonzalez2020} and is depicted in \Cref{fig:QMC} and works as follows:

\begin{itemize}
    \item Input encoding. The input $x\in \mathbb{R}^d$ is encoded using a feature map $\phi_{\mathcal{X}}$
    \begin{equation}
        z := \phi_{\mathcal{X}}(x).
    \end{equation}
    \item Measurement operator. The effect of this measurement operator is to collapse, using a projector to $z$, the part $\mathcal{H_X}$  of the bipartite system while keeping the $\mathcal{H_Y}$ part unmodified. This is done by defining the following operator:
    \begin{equation}
        \pi := zz^T \otimes \mathrm{Id}_{\mathcal{H_Y}},
    \end{equation}
    where $\mathrm{Id}_{\mathcal{H_Y}}$ is the identity operator in $\mathcal{H_Y}$.
    \item Apply the measurement operator to the training density matrix:
    \begin{equation}
        \rho := \frac{\pi\rho_{\mathrm{train}}\pi}
    {\Tr[\pi\rho_{\mathrm{train}}\pi]},
    \end{equation}
    \item Calculate the partial trace of $\rho$ with respect to $\mathcal{X}$ to obtain a density matrix that encodes the prediction:
    \begin{equation}
        \rho_{{\mathcal{Y}}} := \Tr_{\mathcal{X}}[\rho].
    \end{equation}
\end{itemize}

The parameter of the model, without taking into account the parameters of the feature maps, is the $\rho_{\mathrm{train}} \in \mathbb{R}^{D_{\mathcal{X}}D_{\mathcal{Y}}\times D_{\mathcal{X}}D_{\mathcal{Y}}}$  density matrix, where $D_{\mathcal{X}}$ and $D_{\mathcal{Y}}$ are the dimensions of $\mathcal{H_X}$ and $\mathcal{H_Y}$ respectively. As discussed in \Cref{subsec:DMKDE}, the density matrix $\rho_{\mathrm{train}}$ can be factorized as:
\begin{equation}\label{eq:rho_train_factorization2}
    \rho_{\text{train}} = V^T \Lambda V^{\ }
\end{equation}
where $V \in \mathbb{R}^{r \times D_{\mathcal{X}}D_{\mathcal{Y}}}$, $\Lambda \in \mathbb{R}^{r \times r}$ is a diagonal matrix and $r<D_{\mathcal{X}}D_{\mathcal{Y}}$ is the reduced rank of the factorization. This factorization not only helps to reduce the space necessary to store the parameters, but learning $V$ and $\Lambda$, instead of $\rho_{\mathrm{train}}$, helps to guarantee that $\rho_{\mathrm{train}}$ is a valid density matrix.

As in Subsection \ref{subsec:DMKDC}, we described two different approaches to train the system: one based on estimation of the $\rho_{\text{train}}$ and one based on learning $\rho_{\text{train}}$ using gradient descent. QMC can be also trained using these two strategies.

In the estimation strategy, given a training data set $\{(x_i, y_i)\}_{i=1\dots N}$ the training density matrix is calculated by:

        \begin{equation}\label{eq:QMC_rho_density_estimation}
            \rho_{\mathrm{train}}=\frac{1}{N}\sum_{i=1}^N  
            \left(\phi_{\mathcal{X}}(x_i)\otimes\phi_{\mathcal{Y}}(y_i)\right)
            \left(\phi_{\mathcal{X}}(x_i)\otimes\phi_{\mathcal{Y}}(y_i)\right)^T.
        \end{equation}

The computational cost is $O(ND^2_{\mathcal{X}}D^2_{\mathcal{Y}}$).

For the gradient-descent-based strategy (QMC-SGD) we can minimize the following loss function:

\begin{equation}
    L = \sum_{i=1}^{D_{\mathcal{Y}}} y_i\log(\rho_{\mathcal{Y}ii}),
\end{equation}
where $\rho_{\mathcal{Y}ii}$ is the $i$-th diagonal element of $\rho_{\mathcal{Y}}$.

As in DMKDC-SGD, this model can be combined with a deep learning architecture and the parameters can be jointly learned using gradient descent.

QMC can be used with different feature maps for inputs and outputs. For instance, if $\phi_{\mathcal{X}}=\phi_{\mathrm{rff}}$ (\cref{eq:phi_rff}) and $\phi_{\mathcal{Y}}=\phi_{\mathrm{ohe}}$ (\cref{eq:phi_ohe}), QMC corresponds to DMKDC. However, in this case DMKDC is preferred because  of its reduced computational cost.

\subsection{Quantum measurement regression (QMR)}
\label{subsec:QMR}
In this section we show how to use QMC to perform regression. For this we will use a feature map that allows us to encode continuous values. 
%This feature map is based on the softmax quantum feature map presented in \citep{Gonzalez2020}. 
The feature map is defined with the help of $D$ equally distributed landmarks in the $[0,1]$ interval\footnote{Without loss of generality the continuous variable to be encoded is restricted to the $[0,1]$ interval.}:  

\begin{equation}
    \alpha_i = \frac{i-1}{D-1} \ \text{for } i=1\dots D.
\end{equation}
The following function (which is equivalent to a softmax) defines a set of unimodal probability density functions centered at each landmark:

\begin{align} 
    p_i(x)=\left(\frac{\exp (-\beta\|x-\alpha_i\|^2)}{\sum_{j=1}^m \exp(-\beta\|x-\alpha_j\|^2)}\right)_{i=1\dots D},
\end{align}
where $\beta$ controls the shape of the density functions. 

The feature map is defined as:

\begin{align} \label{eq:phi_sm}
\begin{aligned}
\phi_{\text{sm}}:  [0,1] &\to \mathbb{R}^D\\
       x &\mapsto (\sqrt{p_1(x)}, \dots, \sqrt{p_D(x)}). 
\end{aligned}
\end{align}

This feature map is used in QMC as the feature map of the output variable ($\phi_{\mathcal{Y}}$). To calculate the prediction for a new sample $x$ we apply the process described in Subsection \ref{subsec:QMC} to obtain $\rho_{\mathcal{Y}}$. Then the prediction is given by:
\begin{equation}
    \hat{y} = E_{\rho_{\mathcal{Y}}}[\alpha_i] = \sum_{i=1}^D \rho_{\mathcal{Y}ii}\alpha_i.
\end{equation}

Note that this framework also allows to easily compute confidence intervals for the prediction. The model can be trained using the strategies discussed in Subsection \ref{subsec:QMC}. For gradient-based optimization we use a mean squared error loss function:

\begin{equation}
    L = \sum_{i=1}^D (y-\hat{y})^2 + \alpha \sum_{i=1}^D \rho_{\mathcal{Y}ii} (\hat{y}-\alpha_i)^2
\end{equation}
where the second term correspond to the variance of the prediction and $\alpha$ controls the trade-off between error and variance.

\section{Related Work}
\label{sec:related_work}

The ability of density matrices to represent probability distributions has been used in previous works. The early work by \citet{lior} uses the density matrix formalism to perform spectral clustering, and shows that this formalism not only is able to predict cluster labels for the objects being classified, but also provides the probability that the object belongs to each of the clusters. Similarly, \citet{tiwariQIBC2019} proposed a quantum-inspired binary classifier using density matrices, where samples are encoded into pure quantum states. In a similar fashion, \citet{Sergioli2018} proposed a quantum nearest mean classifier based on the trace distance between the quantum state of a sample, and a quantum centroid that is a mixed state of the pure quantum states of all samples belonging to a single class. Another class of proposals directly combine these quantum ideas with customary machine learning techniques, such as frameworks for multi-modal learning for sentiment analysis~\citep{LI202158,melucci2020ieee,ZHANG201821}. 

Since its inception, random features have been used to improve the performance of several kernel methods: kernel ridge regression \citep{avron2017random}, support vector machines (SVM) \citep{sun2018but}, and nonlinear component analysis \citep{xie2015scale}. Besides, random features have been used in conjunction with deep learning architectures in different works \citep{arora2019exact, ji2019polylogarithmic, li2019implicit}.

The combination of RFF and density matrices was initially proposed by~\citet{Gonzalez2020}. In that work, RFF are used as a quantum feature map, among others, and the QMC method (Subsection \ref{subsec:QMC}) was presented. In~\citet{Gonzalez2020} the coherent state kernel showed better performance than the Gaussian kernel. It is important to notice that the coherent state kernel was calculated exactly while the Gaussian kernel was approximated using RFF. It is possible to apply RFF to approximate the coherent state kernel and use it as the quantum feature map in the models presented in this paper. The emphasis of~\citet{Gonzalez2020} is to show that quantum measurement can be used to do supervised learning. In contrast, the present paper addresses a wider problem with several new contributions: a new method for density estimation based on density matrices and RFF, the proof of the connection between this method and kernel density estimation, and new differentiable models for density estimation, classification and regression.

The present work can be seen as a type of quantum machine learning (QML), which is generally referred as the field in the intersection of quantum computing and machine learning~\citep{schuld2015introduction,schuld2018supervised}. In particular, the methods in this paper are in the subcategory of QML called quantum inspired classical machine learning, where theory and methods from quantum physics are borrowed and adapted to machine learning methods intended to run in classical computers. Works in this category include:  quantum-inspired recommendation systems~\citep{tang2019quantum}, quantum-inspired kernel-based classification methods~\citep{tiwari2020kernel,Gonzalez2020}, conversational sentiment analysis based on density matrix-like convolutional neural networks~\citep{zhang2019conversational}, dequantised principal component analysis~\citep{tang2019quantuminspired}, among others. 

Being a memory-based strategy, KDE suffers from large-scale, high dimensional data. Due to this issue, fast approximate evaluation of non-parametric density estimation is an active research topic. Different approaches are proposed in the literature: higher-order divide-and-conquer method \citep{gray2003nonparametric}, separation of near and far-field (pruning) \citep{march2015askit}, and hashing based estimators (HBE) \citep{charikar2017hashing}. Even though the purpose of the present work was not to design methods for fast approximation of KDE, the use of RFF to speed KDE seems to be a promising research direction. Comparing DMKDE against fast KDE approximation methods is part of our future work.

\section{Experimental Evaluation}
\label{sec:experimental_evaluation}
In this section we perform some experiments to evaluate the performance of the proposed methods in different benchmark tasks. The experiments are organized in three subsections: density estimation evaluation, classification evaluation and ordinal regression evaluation. The source code of the methods and the scripts of the experiments are available at \url{https://drive.google.com/drive/folders/16pHMLjIvr6v1zY6cMvo11EqMAMqjn3Xa} as Jupyter notebooks. 

\subsection{Density estimation evaluation}

The goal of these experiments is to evaluate the efficacy and efficiency of DMKDE to approximate a pdf. We compare it against conventional Gaussian KDE.

\subsubsection{Data sets and experimental setup}

We used three datasets:

\begin{itemize}
    \item 1-D synthetic. The first synthetic dataset corresponds to a mixture of univariate Gaussians as shown in Figure \ref{fig:probability_kde}. The mixture weights are 0.3 and 0.7 respectively and the parameters are $(\mu_1=0,\sigma=1)$ and $(\mu_1=5,\sigma=1)$. We generated 10,000 samples for training and use as test dataset 1,000 samples equally spaced in the interval $[-5,10]$.
    \item 2-D synthetic. This dataset corresponds to three spirals as depicted in Figure \ref{fig:spirals-dmkde-likelihood}.The training and test datasets have 10,0000 and 1,000 points respectively, all of them generated with the same stochastic procedure.
    \item MNIST dataset. We used PCA to reduce the original 784 dimension to 40. The resulting vectors were scaled to $[0,1]$. We used stratified sampling to choose 10,000 and 1,000 samples for training and testing respectively.
\end{itemize}

\begin{figure}[tbh]
\begin{centering}
\includegraphics[scale=0.6]{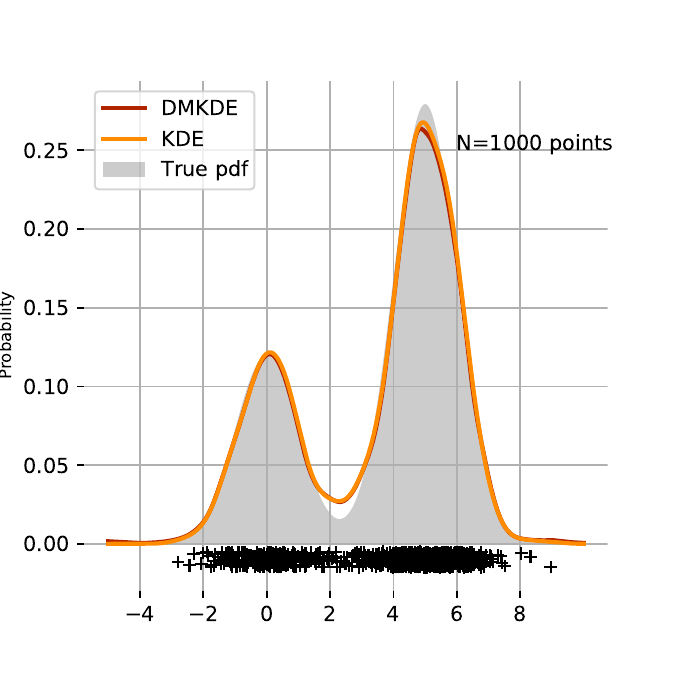}
\par\end{centering}
\caption{1-D synthetic dataset. The gray zone is the area of the true density. The estimated pdf by DMKDE ($\gamma=2$) and KDE ($\gamma=4$) is shown.
\label{fig:probability_kde}}
\end{figure}

We performed two types of experiments over the three datasets. In the first, we wanted to evaluate the accuracy of DMKDE. In the second, we evaluated the time to predict the density on the test set. 

In the first experiment, DMKDE was run with different number of RFF to see how the dimension of the RFF representation affected the accuracy of the estimation. For the 1-D dataset, both the DMKDE prediction and the KDE prediction were compared against the true pdf using root mean squared error (RMSE). For the 2-D dataset the RMSE between the DMKDE prediction and the KDE prediction was evaluated. In the case of MNIST, and because of the small values for the density, we calculated the RMSE between the log density predicted by DMKDE and KDE. The number of eigencomponents ($r$) was chosen by sorting the eigenvalues in descending order and plotting them to look for the curve elbow. For the 1-D and 2-D datasets, the $\gamma$ value was chosen to get a good approximation of the data density, this was visually verified. For the MNIST dataset, the $\gamma$ value was chosen by looking at a histogram of pairwise distances of the data. The value of the parameters were: $(\gamma=16, r=30)$ for the 1-D dataset, $(\gamma=256, r=100)$ for the 2-D dataset, $(\gamma=1, r=150)$ for the MNIST dataset.

For the second experiment, we measured the time taken to predict 1,000 test samples for both KDE and DMKDE using different number of train samples. KDE was implemented in Python using liner algebra operations accelerated by numpy. At least for the experiments reported, our implementation was faster than other KDE implementations available such as the one provided by scikit learn (\url{https://scikit-learn.org/stable/modules/density.html}), which is probably optimized for other use cases. DMKDE was implemented in Python using Tensorflow. The main reason for using Tensorflow was its ability to automatically calculate the gradient of computational graphs. KDE could not benefit from this feature, on the contrary, its performance could be hurt by Tensorflow's larger memory footprint. Another advantage of Tensorflow is its ability to generate code optimized for a GPU, so both methods were run on a 2.20 GHz dual-core Intel(R) Xeon(R) CPU without a GPU to avoid any unfair advantage.

\subsubsection{Results and discussion}

Figure \ref{fig:dmkdevsrawkde} shows how the accuracy of DMKDE increases with an increasing number of RFF. For each configuration 30 experiments were run and the blue solid line represents the mean RMSE of the experiments and the blue region represents the 95\% confidence interval. In all the three datasets, $2^{10}$ RFF achieved a low RMSE. The variance also decreases when the number of RFF is increased. 

\begin{figure*}[tbh]
\begin{centering}
\includegraphics[scale=0.43]{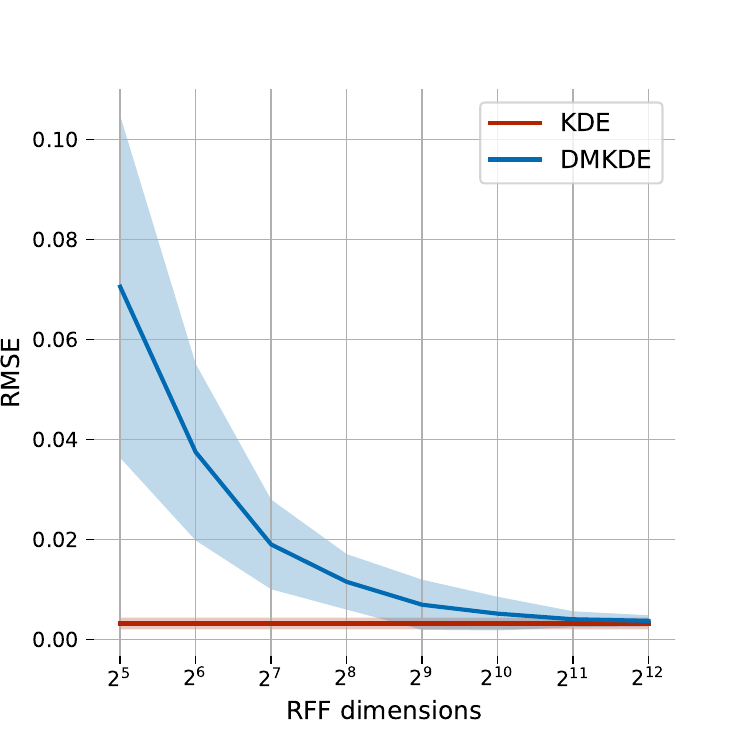}
\includegraphics[scale=0.43]{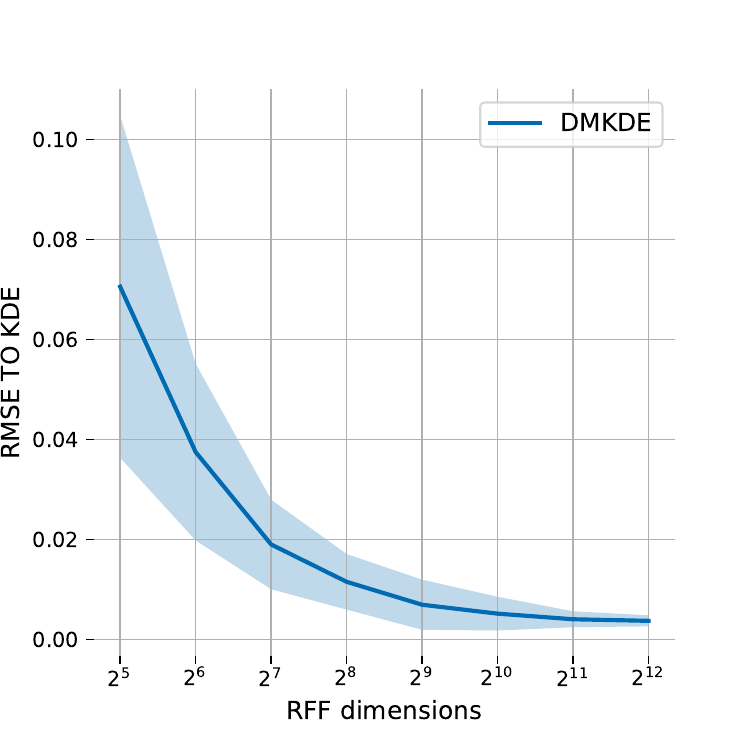}
\includegraphics[scale=0.43]{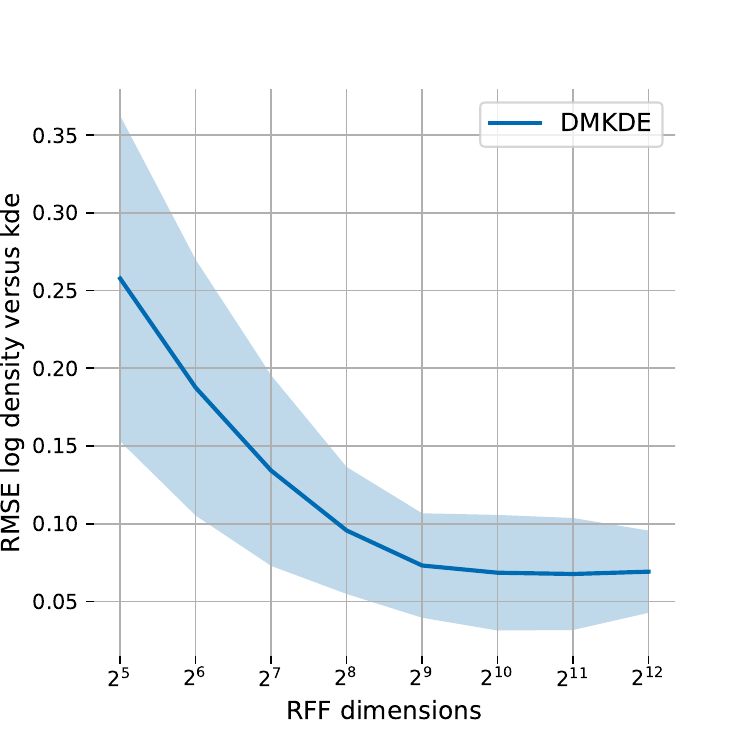}
\par\end{centering}
\caption{Accuracy of the density estimation of DMKDE for different number of RFF for the 1-D dataset (top left), 2-D dataset (top right) and MNIST dataset (bottom). For the 1-D dataset both KDE and DMKDE are compared against the true density. For the two other datasets the difference between KDE and DMKDE is calculated. In all the cases the RMSE is calculated. The blue shaded zone represents the 95\% confidence interval. \label{fig:dmkdevsrawkde}}
\end{figure*}

\begin{figure*}[tbh]
\begin{centering}
\includegraphics[scale=0.33]{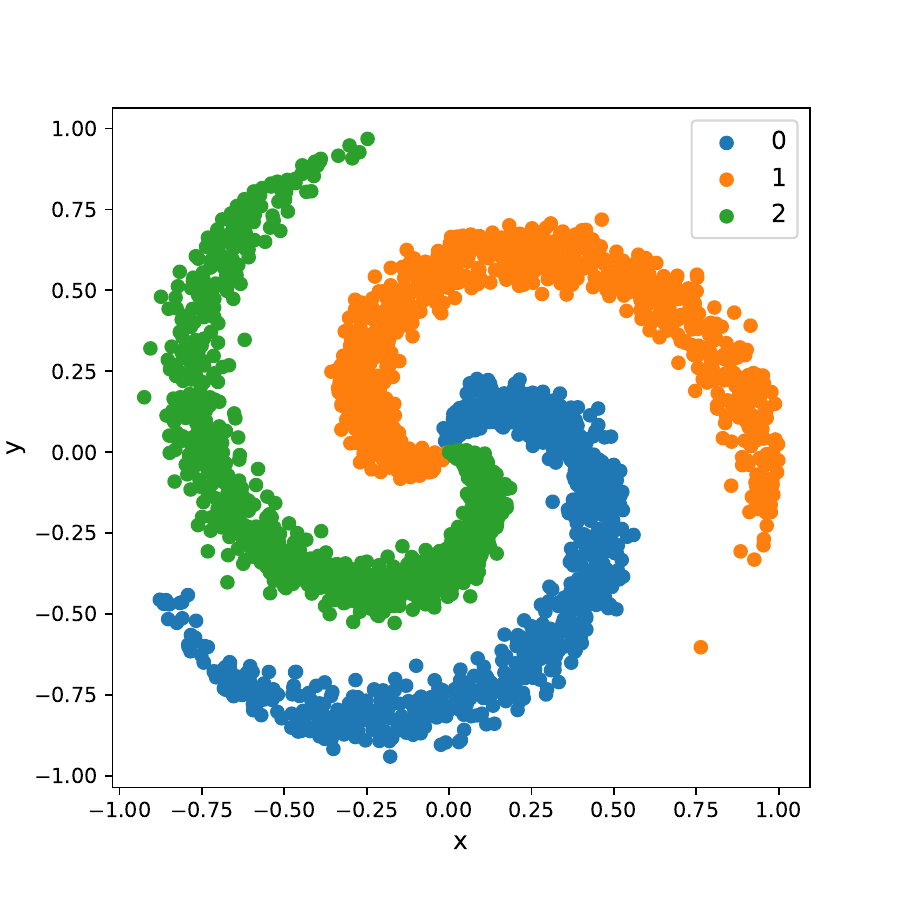}
\includegraphics[scale=0.33]{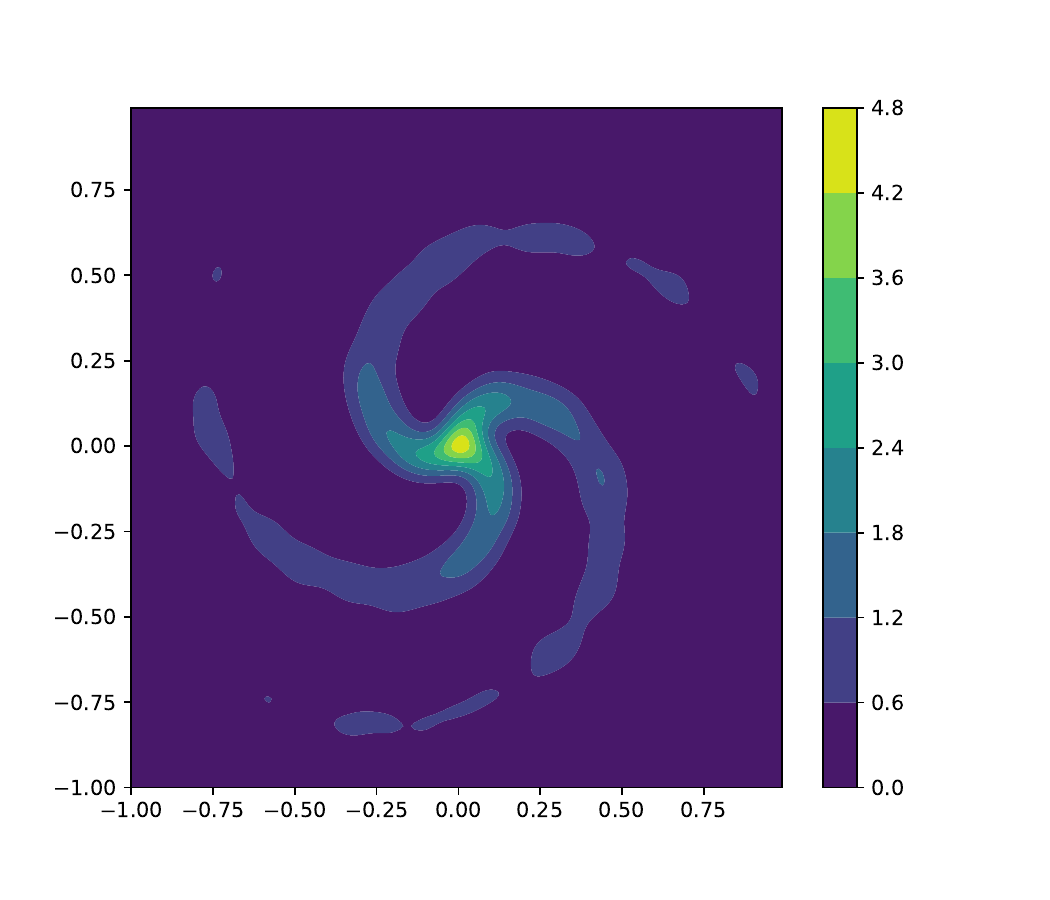}
\includegraphics[scale=0.33]{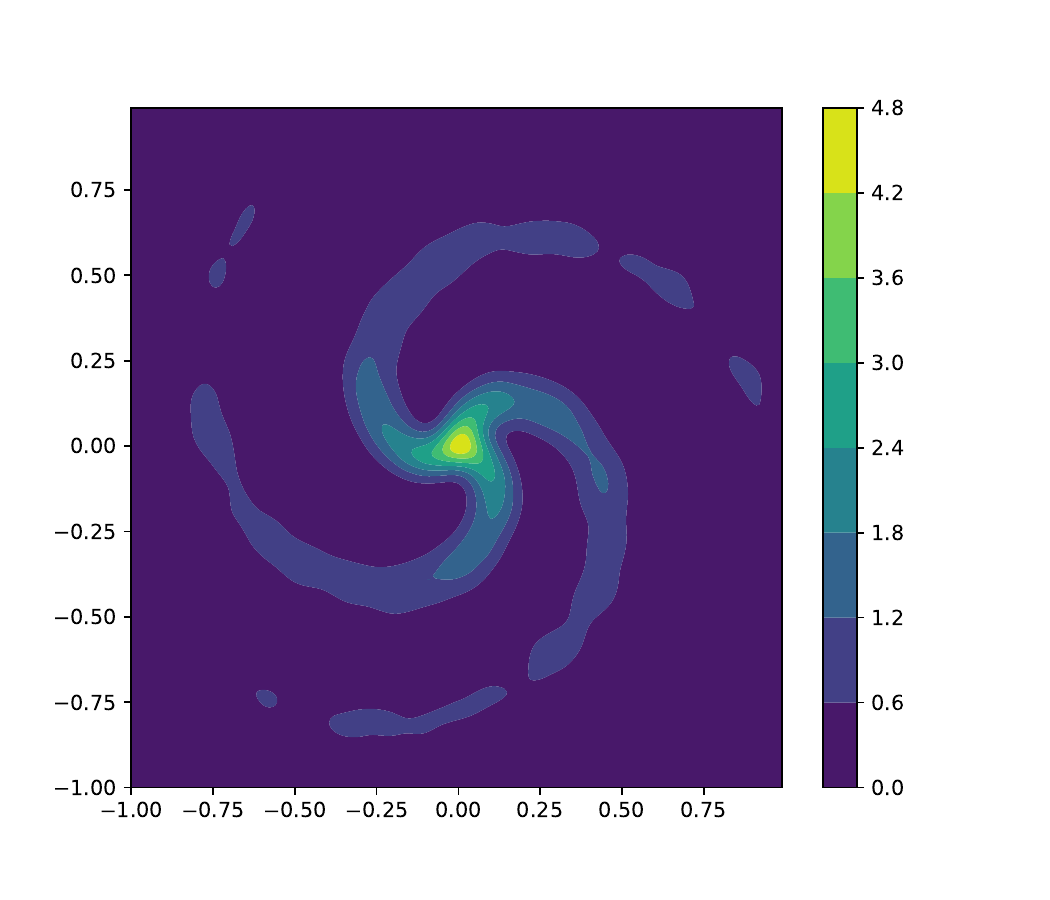}
\par\end{centering}
\caption{2-D spirals dataset (top left) and the density estimation of both KDE (top right) and DMKDE (bottom). \label{fig:spirals-dmkde-likelihood}}
\end{figure*}

\begin{figure*}[tbh]
\begin{centering}
\includegraphics[scale=0.39]{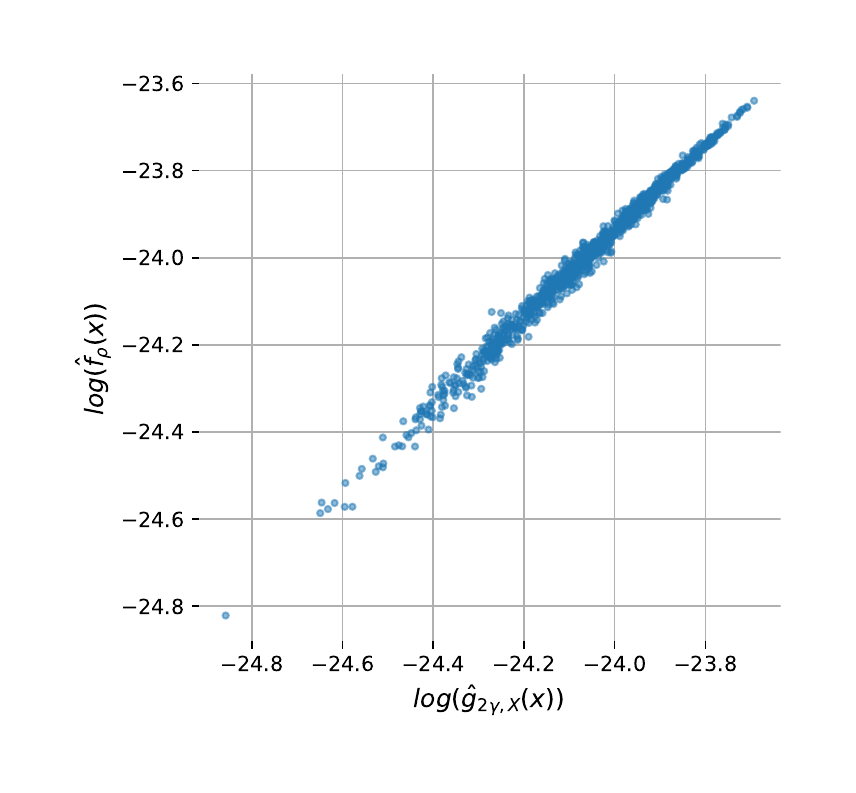}
\includegraphics[scale=0.39]{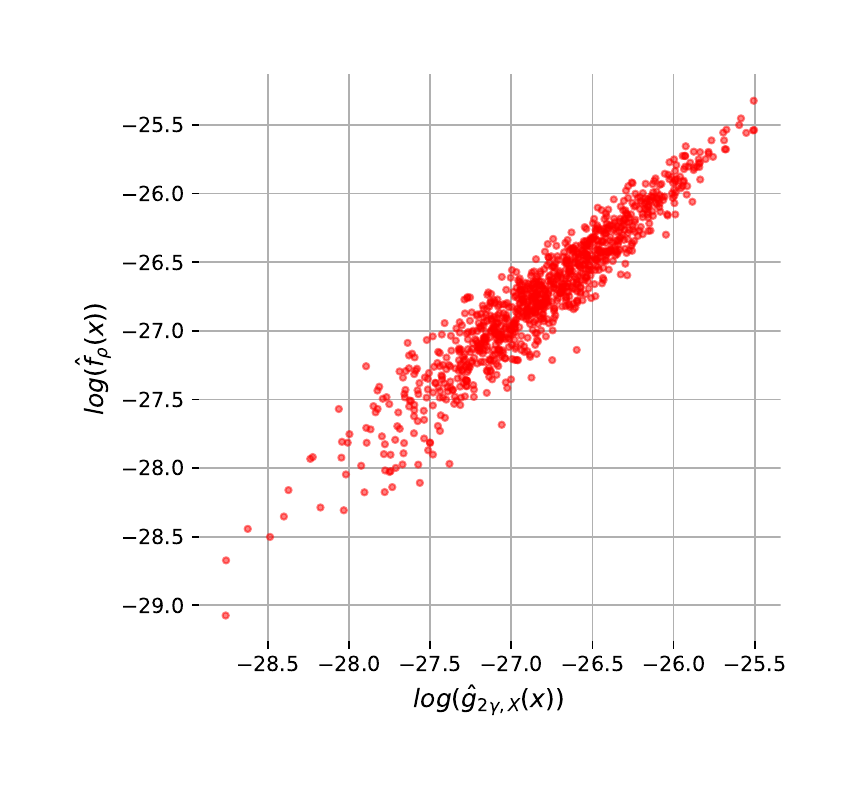}
\includegraphics[scale=0.39]{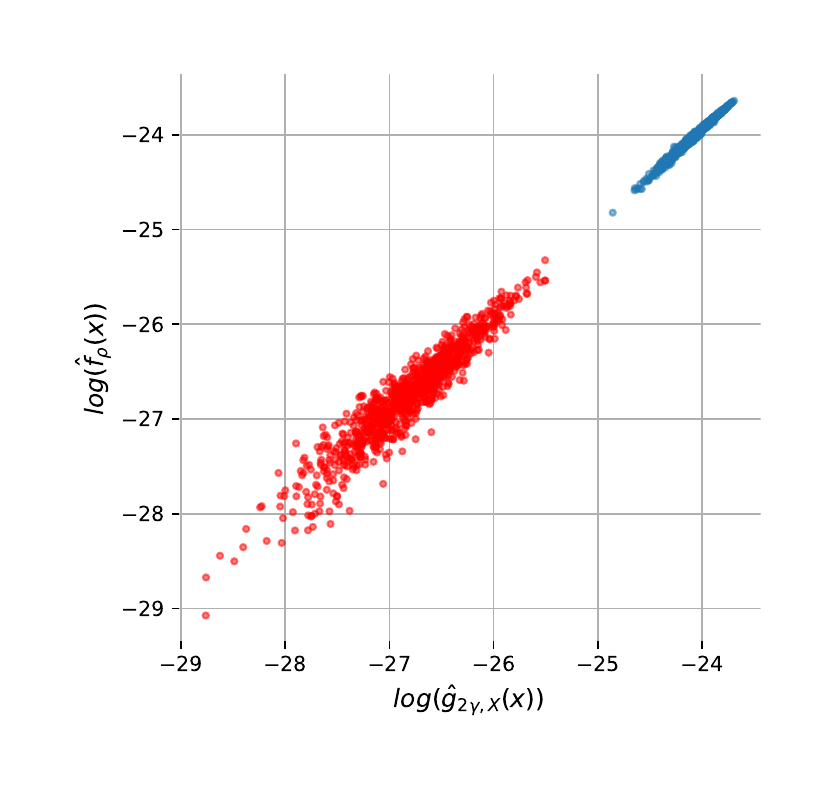}
\par\end{centering}
\caption{Scatter-plots comparing the log density predicted by KDE and DMKDE: test samples (top left), uniformly random generated samples (top right), both test and random samples (bottom). \label{fig:mnist-scatter}}
\end{figure*}

Figure \ref{fig:spirals-dmkde-likelihood} shows the 2-D spirals dataset (left) and the density estimation of both KDE (center) and DMKDE (right).  The density calculated by DMKDE is very close to the one calculated with KDE.

Figure \ref{fig:mnist-scatter} shows a comparison of the log density predicted by KDE and DMKDE. Both models were applied to test samples and samples generated randomly from a uniform distribution. As expected points are clustered around the diagonal. The DMKDE log density of test samples (left) seems to be more accurately predicted than the one of random samples. The reason is that the density of random samples is smaller than the density of test samples and the difference is amplified by the logarithm. 

Figure \ref{fig:timerawkdevsdmkdesgd} shows the time of both methods for different sizes of the training dataset. The prediction time of KDE depends on the size of the training dataset, while the time of DMKDE does not depend on it. The advantage of DMKDE in terms of computation time is clear for training datasets above $10^4$ data samples.

\begin{figure*}[tbh]
\begin{centering}
\includegraphics[scale=0.36]{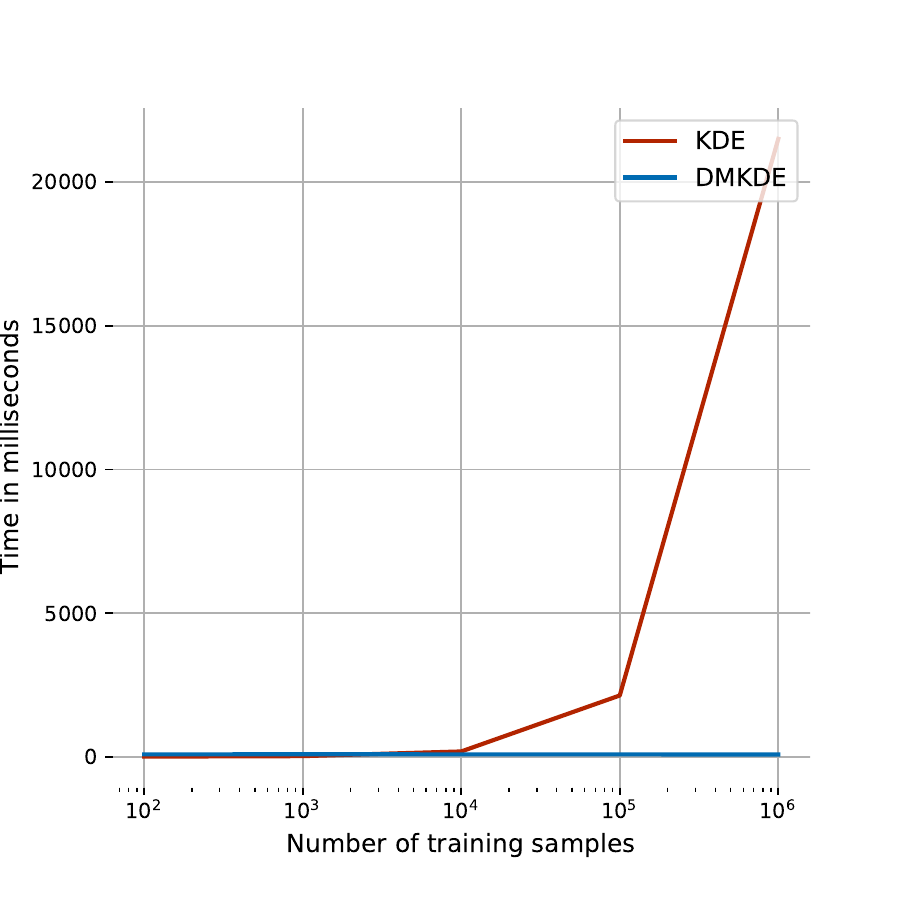}
\includegraphics[scale=0.36]{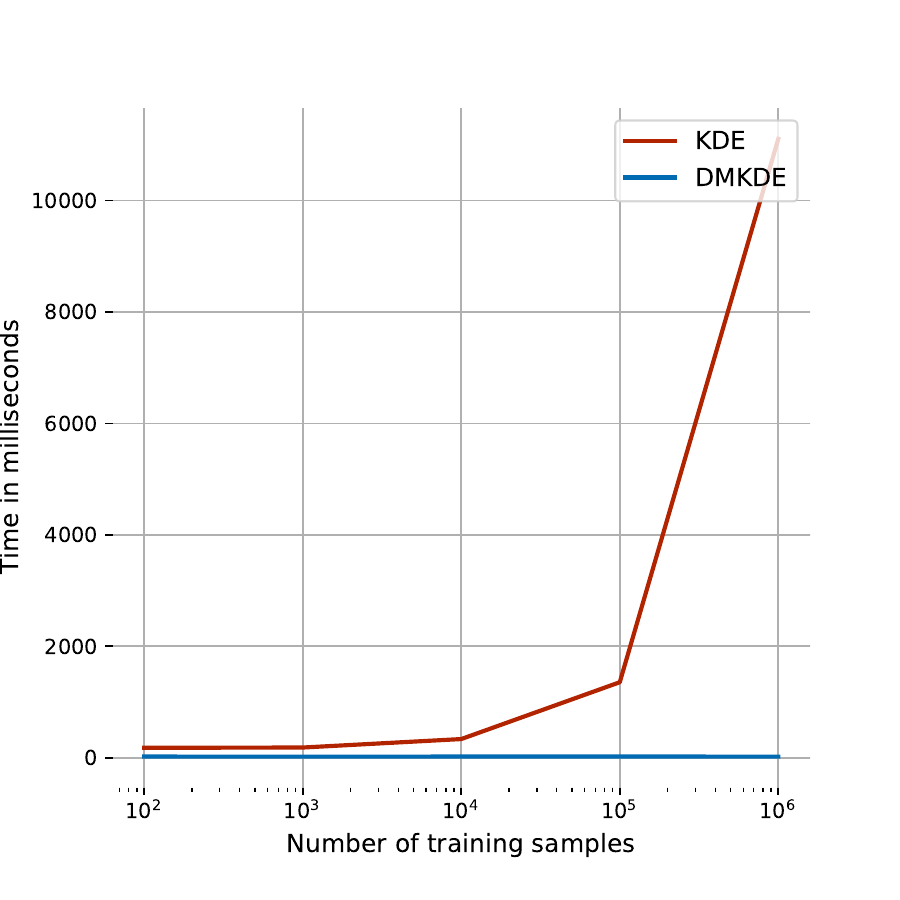}
\includegraphics[scale=0.36]{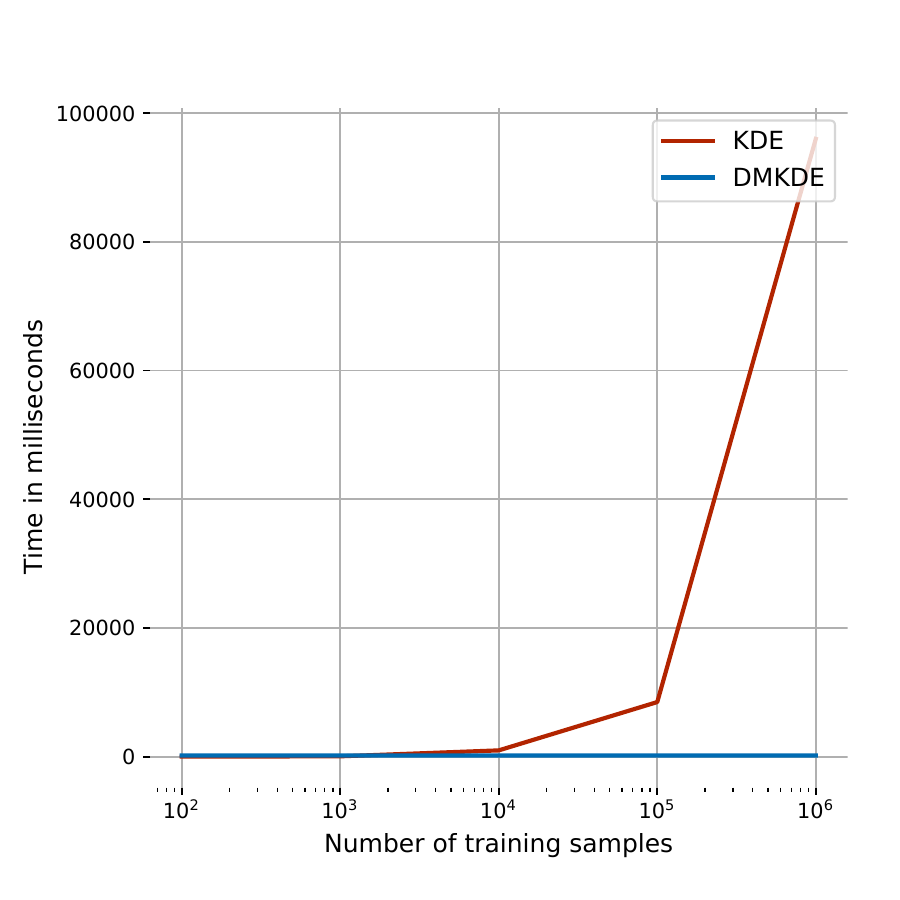}
\par\end{centering}
\caption{Evaluation of the prediction time of DMKDE and KDE: 1-D dataset (top left), 2-D dataset (top right) and MNIST dataset (bottom).
\label{fig:timerawkdevsdmkdesgd}.}
\end{figure*}

\subsection{Classification evaluation}

In this set of experiments, we evaluated DMKDC over different well known benchmark classification datasets. 

\subsubsection{Data sets and experimental setup}

\begin{table}[tbh]
\caption{Data sets used for classification  evaluation.}
\label{Table:classification_dataset}
\vskip 0.15in
\begin{center}
\begin{small}
\begin{sc}
\begin{tabular}{lcccr}
\toprule
Data set & Attributes & Classes & Train-Test \\
\midrule
Letters & 16 & 26 & 14000-6000 \\ 
Usps & 256 & 10 & 7291-2007 \\ 
Forest & 54 & 3 & 70-30 \\ 
Mnist & 784 & 10 & 60000-10000 \\ 
Gisette & 5000 & 2 & 4200-1800 \\ 
Cifar & 3072 & 10 & 60000-10000 \\ 
\bottomrule
\end{tabular}
\end{sc}
\end{small}
\end{center}
\vskip -0.1in
\end{table}

\begin{table}[tbh]
\caption{Specifications of the data sets used for ordinal regression evaluation. Train and Test indicate the number of samples, which is the same for all the twenty partitions.}
\label{datasets_regression}
\vskip 0.15in
\begin{center}
\begin{small}
\begin{sc}
\begin{tabular}{lcccr}
\toprule
Data set & Attributes & Train & Test \\
\midrule
Diabetes & 2 & 30 & 13 \\
Pyrimidines & 27 & 50 & 24 \\
Triazines & 60 & 100 & 86 \\
Wisconsin & 32 & 130 & 64 \\
Machine CPU & 6 & 150 & 59 \\
Auto MPG & 7 & 200 & 192 \\
Boston Housing & 13 & 300 & 206 \\
Stock Domain & 9 & 600 & 350\\
Abalone & 8 & 1000 & 3177 \\
\bottomrule
\end{tabular}
\end{sc}
\end{small}
\end{center}
\vskip -0.1in
\end{table}

Six benchmark data sets were used. The details of these datasets are shown in Table \ref{Table:classification_dataset}. In the case of Gisette and Cifar, we applied a principal component analysis algorithm using 400 principal components in order to reduce the dimension. DMKDC was trained using the estimation strategy (DMKDC) and an ADAM stochastic gradient descent strategy (DMKDC-SGD). As baseline we compared against a linear support vector machine (SVM) trained using the same RFF as DMKDC. The SVM was trained using the LinearSVC model from scikit-learn, which is based in an efficient C implementation tailored to linear SVMs. In the case of MNIST and Cifar, we additionally built a union of a LeNet architecture \citep{lecun1989backpropagation}, as a feature extraction block, and DMKDC-SGD as the classifier layer. The LeNet block is composed of two convolutional layers, one flatten layer and one dense layer. The first convolutional layer has 20 filters, kernel size of 5, same as padding, and ReLu as the activation function. The second convolutional layer has 50 filters, kernel size of 5, same as padding, and ReLu as the activation function. The dense layer has 84 units and ReLU as the activation function. The dense layer is finally connected to DMKDC. We report results for the combined model (LeNet DMKDC-SGD) and the LeNet model with a softmax output layer (LeNet). To make the comparison with baseline models fair, in all the cases the RFF layer of DMKDC-SGD is frozen, so its weights are not modified by the stochastic gradient descent learning process.

For each data set, we made a hyper parameter search using a five-fold cross-validation with 25 randomly generated configurations. The number of RFF was set to 1000 for all the methods. For each dataset we calculated the inter-sample median distance $\mu$ and defined an interval around $\gamma=\frac{1}{2\sigma^2}$. The $C$ parameter of the SVM was explored in an exponential scale from $2^{-5}$ to $2^{10}$. For the ADAM optimizer in DMKDC-SGD with and without LeNet we choose the learning rate in the interval $(0, 0.001]$. The number of eigen-components of the factorization was chosen from $\{0.1,  0.2, 0.5, 1\}$ where each number represents a percentage of the RFF. After finding the best hyper-parameter configuration using cross validation, 10 different experiments were performed with different random initialization. The mean and the standard deviation of the accuracy is reported.

\subsubsection{Results and discussion}

\begin{sidewaystable}
\sidewaystablefn
\caption{Accuracy test results for DMKDC and DMKDC-SGD compared against a linear support vector machine over RFF (SVM-RFF). Two deep learning models are also evaluated on the two image datasets: a convolutional neural network (LeNet) and its combination with DMKDC-SGD (LeNet DMKDC).}
\label{QMC_results}
\vskip 0.15in
\begin{center}
\begin{small}
\begin{sc}
\begin{tabular}{lccccccc}
\toprule
Data Set & SVM-RFF & DMKDC & DMKDC-SGD & LeNet & LeNet DMKDC  \\ 
\midrule
Letters & 0.924$\pm$0.002 & 0.918$\pm$0.002 & \textbf{0.944$\pm$0.002} & - & - \\
USPS & 0.940$\pm$0.001 & 0.806$\pm$0.004 & \textbf{0.967$\pm$0.002} & - & - \\ 
Forest & 0.697$\pm$0.046 & 0.727$\pm$0.002 & \textbf{0.876$\pm$0.001} & - & - \\ 
Gisette  & 0.944$\pm$0.003 & 0.836$\pm$0.003 & \textbf{0.957$\pm$0.001} & - & - \\ 
Mnist & 0.950$\pm$0.006 & 0.811$\pm$0.001 & 0.952$\pm$0.004 & 0.989$\pm$0.001 & \textbf{0.989$\pm$0.001} \\ 
Cifar & 0.453$\pm$0.003 & 0.271$\pm$0.005 & 0.484$\pm$0.006 & 0.616$\pm$0.001 & \textbf{0.628$\pm$0.001} \\ 
\bottomrule
\end{tabular}
\end{sc}
\end{small}
\end{center}
\vskip -0.1in
\end{sidewaystable}

Table \ref{QMC_results} shows the results of the classification experiments. DMKDC is a shallow method that uses RFF, so a SVM using the same RFF is fair and strong baseline. In all the cases, except one, DMKDC-SGD outperforms the SVM, which shows that it is a very competitive shallow classification method. DMKDC trained using estimation shows less competitive results, but they are still remarkable taking into account that this is an optimization-less training strategy that only passes once over the training dataset.  For MNIST and Cifar the use of a deep learning feature extractor is mandatory to obtain competitive results. The results show that DMKDC-SGD can be integrated with deep neural network architectures to obtain competitive results.

The improvement on classification performance of DMKC-SGD comes at the cost of increased training time. The training of DMKDC is very efficient since it corresponds to do an average of the training density matrices. Linear SVM training is also very efficient. In contrast, DMKDC-SGD requires an iterative training process that has to be tuned to get it to converge to a good local optimum, as is the case for current deep learning models. 

\subsection{Ordinal regression evaluation}
Many multi-class classification problems can be seen as ordinal regression problems. That is, problems where labels not only indicate class membership, but also an order. Ordinal regression problems are halfway between a classification problem and a regression problem, and given the discrete probability distribution representation used in QMR, ordinal regression seems to be a suitable problem to test it.

%MAE is a measure that optimizes the total number of errors, although it can be tolerant of large errors.

\subsubsection{Data sets and experimental setup}

\begin{sidewaystable}
\caption{MAE test results for QMR, QMR-SGD and different baseline methods: support vector machines (SVM), Gaussian Processes (GP), Neural Network Rank (NNRank), Ordinal Extreme Learning Machines (ORELM) and Ordinal Regression Neural Network (ORNN). The results are the mean and standard deviation of the MAE for the twenty partitions of each dataset. The best result for each data set is in bold face.}
\label{QMOR_results}
\vskip 0.15in
\begin{center}
\begin{small}
\begin{sc}
\begin{tabular}{lccccccc}
\toprule
Data set & QMR-SGD & QMR & ORNN &  NNRank & GP & SVM \\
\midrule
Diabetes & \textbf{0.511$\pm$0.07} & 0.611$\pm$0.02& -- &  0.546$\pm$0.15 & 0.662$\pm$0.14 & 0.746$\pm$0.14 \\
Pyrimidines & 0.408$\pm$0.07 & 0.946$\pm$0.06 & 0.677$\pm$0.20 &  0.450$\pm$0.10 & \textbf{0.392$\pm$0.07} & 0.450$\pm$0.11 \\
Triazines & \textbf{0.674$\pm$0.02} & 0.695$\pm$0.01 & -- & 0.730$\pm$0.07 & 0.687$\pm$0.02 & 0.698$\pm$0.03 \\
Wisconsin & \textbf{0.985$\pm$0.04}& 1.114$\pm$0.04 & -- &  -- & 1.010$\pm$0.09 & 1.003$\pm$0.07 \\
Machine & \textbf{0.171$\pm$0.03} & 0.995$\pm$0.07 & 0.451$\pm$0.03  & 0.186$\pm$0.04 & 0.185$\pm$0.04  & 0.192$\pm$0.04 \\
Auto & \textbf{0.230$\pm$0.02} & 0.710$\pm$0.02 & -- &  0.281$\pm$0.02 & 0.241$\pm$0.02 & 0.260$\pm$0.02 \\
Boston & 0.270$\pm$0.02 & 0.679$\pm$0.02 & --  & 0.295$\pm$0.04 & \textbf{0.259$\pm$0.02} & 0.267$\pm$0.02 \\
Stocks & \textbf{0.103$\pm$0.01} & 0.971$\pm$0.00 & 0.127$\pm$0.01  & 0.127$\pm$0.02 & 0.120$\pm$0.02 & 0.108$\pm$0.02 \\
Abalone & 0.233$\pm$0.01 &0.307$\pm$0.00 & 0.635$\pm$0.01  & \textbf{0.226$\pm$0.01} & 0.232$\pm$0.00 & 0.229$\pm$0.00 \\
\bottomrule
\end{tabular}
\end{sc}
\end{small}
\end{center}
\vskip -0.1in
\end{sidewaystable}

Nine standard benchmark data sets for ordinal regression were used. The details of each data set are reported in Table~\ref{datasets_regression}. These data sets are originally used in metric regression tasks. To convert the task into an ordinal regression one, the target values were discretized by taking five intervals of equal length over the target range. For each set, 20 different train and test partitions are made. These partitions are the same used by \citet{Chu2005} and several posterior works, and are publicly available at
\url{http://www.gatsby.ucl.ac.uk/~chuwei/ordinalregression.html}. The models were evaluated using the mean absolute error (MAE), which is  a popular and widely used measure in ordinal regression  \citep{Gutierrez2016,Garg2020}.

QMR was trained using the estimation strategy (QMR) and an ADAM stochastic gradient descent strategy (QMR-SGD). For each data set, and for each one of the 20 partitions, we made a hyper parameter search using a five-fold cross-validation procedure. The search was done generating 25 different random configuration. The range for $\gamma$ was computed in the same way as for the classification experiments, $\beta \in (0, 25)$, the number of RFF randomly chosen between the number of attributes and $1024$, and the number of eigen-components of the factorization was chosen from $\{0.1,  0.2, 0.5, 1\}$ where each number represents a percentage of the RFF. For the ADAM optimizer in QMR-SGD we choose the learning rate in the interval $(0, 0.001]$ and $\alpha \in (0,1)$. The RFF layer was always set to trainable, and the criteria for selecting the best parameter configuration was the MAE performance.

\subsubsection{Results and discussion}

 For each data set, the means and standard deviations of the test MAE for the 20 partitions are reported in Table \ref{QMOR_results}, together with the results of previous state-of-the-art works on ordinal regression: Gaussian Processes (GP) and support vector machines (SVM) \citep{Chu2005}, Neural Network Rank (NNRank) \citep{Cheng2008}, Ordinal Extreme Learning Machines (ORELM) \citep{Deng2010} and Ordinal Regression Neural Network (ORNN) \citep{Fernandez-Navarro2014}.

QMR-SGD shows a very competitive performance. It outperforms the baseline methods in six out of the nine data sets. The training strategy based on estimation, QMR, did not performed as well. This evidences that for this problem a fine tuning of the representation is required and it is successfully accomplished by the gradient descent optimization. 

\section{Conclusions}
\label{sec:conclusions}

The mathematical framework underlying quantum mechanics is a powerful formalism that harmoniously combine linear algebra and probability in the form of density matrices. This paper has shown how to use these density matrices as a building block for designing different machine learning models. The main contribution of this work is to show a novel perspective to learning that combines two very different and seemingly unrelated tools, random features and density matrices. The, somehow surprising, connection of this combination with kernel density estimation provides a new way of representing and learning probability density functions from data. The experimental results showed some evidence that this building block can be used to build competitive models for some particular tasks. However, the full potential of this new perspective is still to be explored. Examples of directions of future inquire include using complex valued density matrices, exploring the role of entanglement and exploiting the battery of practical and theoretical tools provided by quantum information theory.

% Acknowledgements should only appear in the accepted version.
\section*{Statements and Declarations}

The authors declare that they have no known competing interests.

\bibliography{learning_with_dm_and_rf}

\newpage

\begin{appendices}
\section{Proofs}\label{ap:proofs}
\begin{proposition}
    Let $\mathcal{M}$ be a compact subset of $\mathbb{R}^d$ with a diameter $\text{diam}(\mathcal{M})$, let $X=\{x_i\}_{i=1\dots N}\subset \mathcal{M}$ a set of iid samples,  then  $\hat{f}_{\rho_{\mathrm{train}}}$ (\cref{eq:f_hat_rho_train}) and $\hat{f}_{\gamma}$  satisfy:
    
      \begin{align}
      \mathrm{Pr} &\left[
      \sup_{x \in \mathcal{M}} 
      \vert \hat{f}_{\rho_{\mathrm{train}}}(x) - \hat{f}_{\gamma}(x)\vert  
      \ge \epsilon 
      \right] \le \nonumber\\
      &2^8\left(\frac{\sqrt{2d\gamma} \mathrm{diam}(\mathcal{M})}{3M_{\gamma}\epsilon}\right)^2
      \exp\left(-\frac{D(3M_{\gamma}\epsilon)^2}{4(d+2)}\right)
      \end{align}
\end{proposition}

\begin{proof}
  \begin{align}\label{eq:f_rho_sum_phi_rff}
    \hat{f}_{\rho_{\mathrm{train}}}(x) & =  \frac{1}{M_{\gamma}}\phi_{\text{rff}}^T(x) \rho_{\mathrm{train}} \phi_{\text{rff}}(x)  \nonumber\\
    & = \frac{1}{M_{\gamma}}\phi_{\text{rff}}^T(x) \left( \frac{1}{N}\sum_{i=1}^N \phi_{\text{rff}}(x_i) \phi_{\text{rff}}^T(x_i)\right) \phi_{\text{rff}}(x) \nonumber\\
    & = \frac{1}{M_{\gamma}N}\sum_{i=1}^N \phi_{\text{rff}}^T(x) \phi_{\text{rff}}(x_i) \phi_{\text{rff}}^T(x_i) \phi_{\text{rff}}(x) \nonumber\\
    & = \frac{1}{M_{\gamma}N}\sum_{i=1}^N (\phi_{\text{rff}}^T(x) \phi_{\text{rff}}(x_i))^2
  \end{align}
  Remembering that in \cref{eq:f_hat_rho_train} we used a spread parameter of $\frac{\gamma}{2}$ to calculate the parameters of $\phi_{\text{rff}}$ and   because of Theorem \ref{thm:rff} we know that 
  \begin{multline*}
      \mathrm{Pr} \left[
      \sup_{x,y \in \mathcal{M}} \vert \phi_{\mathrm{rff}}^T(x)\phi_{\mathrm{rff}}(y) - e^{-\frac{\gamma}{2}\|x-y\|^2} \vert 
      \ge \epsilon \right] \le \\
      2^8\left(\frac{\sqrt{d\gamma} \mathrm{diam}(\mathcal{M})}{\epsilon}\right)^2
      \exp\left(-\frac{D\epsilon^2}{4(d+2)}\right) = B
  \end{multline*}
  By construction $\vert \phi_{\mathrm{rff}}^T(x)\phi_{\mathrm{rff}}(y) + e^{-\frac{\gamma}{2}\|x-y\|^2}\vert  \le 3$, then
  $\vert (\phi_{\mathrm{rff}}^T(x)\phi_{\mathrm{rff}}(y))^2 - e^{-\gamma\|x-y\|^2}\vert 
  = \vert (\phi_{\mathrm{rff}}^T(x)\phi_{\mathrm{rff}}(y) - e^{-\frac{\gamma}{2}\|x-y\|^2})(\phi_{\mathrm{rff}}^T(x)\phi_{\mathrm{rff}}(y) + e^{-\frac{\gamma}{2}\|x-y\|^2})\vert 
  \le 3\vert (\phi_{\mathrm{rff}}^T(x)\phi_{\mathrm{rff}}(y) - e^{-\frac{\gamma}{2}\|x-y\|^2})\vert  $.
  Then  
    \begin{equation}\label{eq:diff_squares_bound}
      \mathrm{Pr} \left[
      \sup_{x,y \in \mathcal{M}} \vert (\phi_{\mathrm{rff}}^T(x)\phi_{\mathrm{rff}}(y))^2 - e^{-\gamma\|x-y\|^2} \vert 
      \ge 3\epsilon \right] \le B
  \end{equation}
  
 Combining Equations \cref{eq:f_rho_sum_phi_rff} and \cref{eq:diff_squares_bound} we get:
\begin{equation*}
      \mathrm{Pr} \left[
      \sup_{x \in \mathcal{M}} 
      \vert \hat{f}_{\rho}(x) - \hat{f}_{\gamma}(x)\vert  
      \ge 3M_{\gamma}\epsilon 
      \right] \le B
\end{equation*}

Making a variable change we get:

      \begin{align}
      \mathrm{Pr} &\left[
      \sup_{x \in \mathcal{M}} 
      \vert \hat{f}_{\rho_{\mathrm{train}}}(x) - \hat{f}_{\gamma}(x)\vert  
      \ge \epsilon 
      \right] \le \nonumber\\
      &2^8\left(\frac{\sqrt{2d\gamma} \mathrm{diam}(\mathcal{M})}{3M_{\gamma}\epsilon}\right)^2
      \exp\left(-\frac{D(3M_{\gamma}\epsilon)^2}{4(d+2)}\right)
      \end{align}
\end{proof}

%%=============================================%%
%% For submissions to Nature Portfolio Journals %%
%% please use the heading ``Extended Data''.   %%
%%=============================================%%

%%=============================================================%%
%% Sample for another appendix section			       %%
%%=============================================================%%

%% \section{Example of another appendix section}\label{secA2}%
%% Appendices may be used for helpful, supporting or essential material that would otherwise 
%% clutter, break up or be distracting to the text. Appendices can consist of sections, figures, 
%% tables and equations etc.

\end{appendices}

\end{document}